\renewcommand{\citealt}[1]{\citeauthor{#1},~\citeyear{#1}}
\newtheorem{proposition}{Proposition}
\newtheorem{definition}{Definition}
\newcommand{\n}{\mathbf{n}}
\newcommand{\y}{\mathbf{y}}
\newcommand{\x}{\mathbf{x}}
\newcommand{\mub}{\boldsymbol\mu}
\newcommand{\thetab}{\boldsymbol{\theta}}
\renewcommand{\Pr}{\ensuremath\mathrm{Pr}}
\newcommand{\M}{\mathcal{M}}
\newcommand{\X}{\mathcal{X}}
\renewcommand{\S}{\mathcal{S}}
\renewcommand{\L}{\mathcal{L}}
\newcommand{\T}{\mathcal{T}}
\newcommand{\I}{\mathbb{I}}
\newcommand{\E}{\mathbb{E}}
\newcommand{\R}{\mathbb{R}}
\newcommand{\Z}{\mathbb{Z}}
\DeclareMathOperator{\Var}{Var}
\DeclareMathOperator{\MSE}{MSE}
\DeclareMathOperator*{\argmax}{argmax}
\newcommand{\grad}{\nabla}
\newcommand{\eat}[1]{}
\newcommand{\blue}[1]{\textcolor{blue}{#1}}
\newcommand{\dan}[1]{#1}
\newcommand{\savespace}[1]{}
\tikzstyle{rectangle}=[draw=black,thick,align=center,inner sep=5pt]
\tikzstyle{circ}=[circle,draw=black,thick,minimum size=3ex, inner sep=.4ex,align=center]
\tikzstyle{oval}=[ellipse,draw=black,thick,minimum size=3ex, inner sep=.9ex,align=center]
\tikzset{>={Latex[scale=2]}}
\icmltitlerunning{Differentially Private Learning of Graphical Models}
\begin{document} 

\twocolumn[
\icmltitle{Differentially Private Learning of \dan{Undirected} Graphical Models Using \dan{Collective Graphical Models}}

\icmlsetsymbol{equal}{*}

\begin{icmlauthorlist}
\icmlauthor{Garrett Bernstein}{umass}
\icmlauthor{Ryan McKenna}{umass}
\icmlauthor{Tao Sun}{umass}
\icmlauthor{Daniel Sheldon}{umass,mhc}
\icmlauthor{Michael Hay}{colgate}
\icmlauthor{Gerome Miklau}{umass}
\end{icmlauthorlist}
    
\icmlaffiliation{umass}{University of Massachusetts Amherst}
\icmlaffiliation{mhc}{Mount Holyoke College}
\icmlaffiliation{colgate}{Colgate University}
\icmlcorrespondingauthor{Garrett Bernstein}{gbernstein@cs.umass.edu}

\icmlkeywords{graphical models, differential privacy, collective graphical models, aggregate data}

\vskip 0.3in
]

\printAffiliationsAndNotice{}

\begin{abstract}
We investigate the problem of learning discrete, undirected graphical models in a differentially private way.
We show that the approach of releasing noisy sufficient statistics using the Laplace mechanism achieves a good trade-off between privacy, utility, and practicality. A naive learning algorithm that uses the noisy sufficient statistics ``as is'' outperforms general-purpose differentially private learning algorithms. However, it has three limitations: it ignores knowledge about the data generating process, rests on uncertain theoretical foundations, and exhibits certain pathologies. We develop a more principled approach that applies the formalism of collective graphical models to perform inference over the true sufficient statistics within an expectation-maximization framework. We show that this learns better models than competing approaches on both synthetic data and on real human mobility data used as a case study.
\end{abstract}

\section{Introduction} 
\label{sec:introduction}
Graphical models are a central tool in probabilistic modeling and machine learning. They pair expressive probability models with algorithms that leverage the graphical structure for efficient inference and learning. 
However, with data collection and modeling growing in importance in nearly all domains of society, there is increasing demand to apply graphical models in settings where the underlying data is sensitive and must be kept private. For example, consider applying graphical models to analyze electronic health records,
with the goal of guiding public health policy.
How can we derive these useful population-level outcomes without compromising the privacy of individuals?

\emph{Differential privacy} is a widely studied formalism for private data analysis~\cite{Dwork:2006aa}.
It provides a statistical privacy guarantee to individuals: the output of a differentially private algorithm is statistically nearly unchanged even if any single individual's record is added to or removed from the input data set.
The general idea is to carefully randomize the algorithm so that the (random) output does not depend too much on any individual's data.

Differentially private machine learning cleanly addresses the problem of extracting useful population-level models from data sets while protecting the privacy of individuals.
\dan{Indeed, this is an active and important research area (see Section~\ref{sec:related_work}), which includes private learning algorithms for a variety of general frameworks and specific machine learning models.}
This paper addresses the problem of privately learning parameters in a widely used class of probabilistic models: discrete, undirected graphical models.
\dan{Although our problem can be cast in terms of general private learning frameworks, these do not lead to practical algorithms. Previous work also addresses private learning for \emph{directed} graphical models (J.~\citealt{JZhang:2014aa}; Z.~\citealt{ZZhang:2016aa}).
Our problem of learning in undirected models, which are not locally normalized, is more general and substantially harder computationally.
}


To learn accurate models under differential privacy, it is critical to randomize the algorithm ``just enough'' to achieve the desired privacy guarantee without diminishing the quality of the learned model too much. 
This is usually done by modifying a learning algorithm to add noise to some intermediate quantity $X$, with the noise magnitude calibrated to the \emph{sensitivity} of $X$, a measure of how much $X$ can depend on any single individual's data in the worst case~\cite{Dwork:2006aa}. The randomization renders the noisy estimate of $X$ safe for release; all subsequent calculations using the noisy $X$, but not the original data, are also safe. 
Where should noise be injected into a machine learning algorithm to achieve the best utility? 
We highlight two high-level goals: \savespace{\begin{itemize}[leftmargin=*,itemsep=0pt]
\item Noise should be added at an ``information bottleneck'', so the sensitivity is as small as possible relative to the information being sought.\footnote{Sensitivity scales with the number of measurements: all else equal, a lower dimensional quantity will have lower sensitivity.}
\item Noise should be added to a quantity for which the sensitivity can be bounded tightly, so the noise magnitude can be kept as small as possible.
\end{itemize}} (1) Noise should be added at an ``information bottleneck'', so the sensitivity is as small as possible relative to the information being sought,\footnote{Sensitivity scales with the number of measurements; all else equal, a lower dimensional quantity will have lower sensitivity.} (2) noise should be added to a quantity for which the sensitivity can be bounded tightly, so the noise magnitude can be kept as small as possible.
These two principles are often at odds. For example, adding noise to the final learned parameters $\theta$ \citep[known as \emph{output perturbation};][]{Dwork:2006aa},
is appealing from the information bottleneck standpoint,
but if the learning algorithm is complex we may not be able to analyze the sensitivity and would have to rely on a coarse bound. Indeed, general private learning frameworks bound the sensitivity using quantities such as Lipschitz, strong-convexity, and smoothness constants~\cite{bassily2014private,Wu:2016aa}
or diameter of the parameter space~\cite{smith2008efficient}, which may be loose in practice.

In this paper we will take the approach of adding noise to the sufficient
statistics of a graphical model using the Laplace mechanism, \dan{a high-level approach that has also been applied recently for directed models~\cite{ZZhang:2016aa,Foulds:2016aa}}. This has a number of advantages. First, sufficient statistics, by definition, are an information bottleneck. Second, it is very easy to exactly analyze the sensitivity of sufficient statistics in graphical models, which are contingency tables. Third, adding Laplace noise to contingency tables prior to release is very simple, so it is reasonable to imagine adoption in practice, say, by public agencies.

However, it is not entirely clear how to learn parameters of a graphical model with \emph{noisy} sufficient statistics. One option, which we will refer to as \emph{naive MLE}, is to ignore the noise and conduct maximum-likelihood estimation as if we had true sufficient statistics. This works reasonably well in practice, and is competitive with or better than state-of-the-art general-purpose methods. In fact, we will show that naive MLE is consistent and achieves the same \emph{asymptotic} mean-squared error as non-private MLE. However, at reasonable sample sizes the error due to privacy is significant, and the approach has several pathologies~\citep[see also][]{yang2012differential,karwa2014differentially,karwa2016inference}, some of which make it difficult to apply in practice.
Therefore, we adopt a more principled approach of performing \emph{inference} about the true sufficient statistics within an expectation–maximization (EM) learning framework.

The remaining problem is how to conduct inference over sufficient statistics of a graphical model given noisy observations thereof. This is exactly the goal of inference in \emph{collective graphical models}~\citep[CGMs;][]{Sheldon:2011aa}, \dan{and we will adapt CGM inference techniques to solve this problem.}
Put together, our results significantly advance the state-of-the-art for privately learning discrete, undirected graphical models. We clarify the theory and practice of naive MLE. We show that it learns better models than existing state-of-the-art approaches in most scenarios across a broad range of synthetic tasks, and in experiments modeling human mobility from wifi access point data. We then show the more principled approach of conducting inference with CGMs is superior to competing approaches in nearly all scenarios.

\subsection{Related Work}
\label{sec:related_work}

\dan{
Differential privacy has been applied to many areas of machine learning, including learning specific models such as logistic regression~\cite{Chaudhuri:2009aa}, support vector machines~\cite{Rubinstein:2009aa}, and deep neural networks~\cite{Abadi:2016aa}; privacy in general frameworks such as empirical risk minimization~\citep[ERM; ][]{Chaudhuri:2011aa,kifer2012private,Jain:2013aa,bassily2014private}, gradient descent~\cite{Wu:2016aa}, and parameter estimation~\cite{smith2011privacy}; and theoretical analysis of what can be learned privately~\citep[e.g.,][]{blum2005practical,kasiviswanathan2011can}.

A key aspect of our work is conducting probabilistic inference over data or model parameters given knowledge of the probabilistic privacy mechanism and its output. \citet{karwa2014differentially,karwa2016inference} take a similar approach but for exponential random graph models, as do \citet{Williams:2010aa}, but for the factored exponential mechanism. Because sufficient statistics of graphical models are contingency tables, our work connects to the well-studied problem of releasing differentially private contingency tables~\cite{barak2007privacy,yang2012differential,hardt2012simple}; we adopt the Laplace mechanism because it is simple and fits well within our learning framework. 

We highlight connections between CGMs and differential privacy and adopt existing inference techniques for CGMs. In general, the inference problems we wish to solve are NP-hard~\cite{Sheldon:2013aa}, but a number of efficient approximate inference algorithms  are available~\cite{Liu:2014aa,Sun:2015aa,vilnis2015bethe}. In a paper that was primarily about CGM inference, \citet{Sun:2015aa} conducted a case study using CGMs to privately learn Markov chains; we build on this approach, which was limited in scope and did not address general graph structures.

Our work connects to an active current line of work on private probabilistic inference, some of which directly addresses learning in directed graphical models, but not the more challenging problem of learning in  undirected graphical models. Several closely related approaches, which we refer to as “One Posterior Sampling” (OPS), show that a single sample drawn from a posterior distribution is differentially private~\cite{Dimitrakakis:2014aa,Wang:2015aa,ZZhang:2016aa}. This can be understood as applying the exponential 
mechanism to the log-likelihood function, and can provide a point estimate for graphical model parameters~\cite{ZZhang:2016aa}. To apply OPS, one must sample from the posterior over parameters, $p(\Theta | X)$, which is straightforward for directed graphical models with conjugate priors, but not in undirected models, where posteriors over parameters are usually intractable. \citet{ZZhang:2016aa} and \citet{Foulds:2016aa} also developed fully Bayesian methods  using Laplace noise-corrupted sufficient statistics to update posterior parameters. Similar considerations apply to this approach, which matches ours in that it uses the same data release mechanism, but, like OPS, requires conjugate priors and thus easily applies only to directed graphical models. \citet{Wang:2015aa} also describe MCMC approaches to draw many private samples from a posterior distribution; this is another general framework that could apply to our problem, but, it relies on loose sensitivity bounds and since we only request point estimates, it would waste privacy budget by drawing many samples.
}



\section{Background and Problem Statement} 
\label{sec:problem_statement}

\newcommand{\C}{\mathcal{C}}
\def\alg{{\cal A}}
\def\db{\mathbf{X}}
\def\nbrs{\textrm{nbrs}}

We consider data sets consisting of $T$ discrete attributes associated with each individual. Let $x_t \in \X$ denote the value of the $t$th attribute of an individual; we assume for simplicity of notation that all variables take values in the same finite set $\X$.  Let $\x = (x_1, \ldots, x_T)$ denote the complete vector of attributes for an individual, and let $\mathbf{X} = (\x^{(1)}, \x^{(2)}, \ldots, \x^{(N)})$ denote a data set for an entire population of $N$ individuals.

\subsection{Differential Privacy}

Differential privacy offers strong privacy protection by imposing constraints on any algorithm that computes on the private dataset.  Informally, it requires that an individual's data has a bounded effect on the algorithm's behavior.  The formal definition requires reasoning about all pairs of datasets that are otherwise identical except one dataset contains one additional individual's data vector.
Let $\nbrs(\db)$ denote the set of datasets that differ from $\db$ by at most one individual's vector---i.e., if $\db' \in \nbrs(\db)$, 
then $\db' = (\x^{(1)}, \dots, \x^{({i-1})}, \x^{({i+1})}, \dots, \x^{(n)})$ for some $i$ or $\db' = (\x^{(1)}, \dots, \x^{({i})}, \x', \x^{({i+1})}, \dots, \x^{(n)})$ for some $i$ and some $\x' \in \X^T$.

\begin{definition}[Differential Privacy;~\citealt{Dwork:2006aa}] \label{def:dp}
A randomized algorithm $\alg$ satisfies $(\epsilon, \delta)$-differential privacy if for any input $\db$, any $\db' \in \nbrs(\db)$ and any subset of outputs $S \subseteq \textrm{Range}(\alg)$, 
$$ \Pr[\alg(\db) \in S] \leq \exp(\epsilon) \Pr[\alg(\db') \in S] + \delta.$$
\end{definition}
When $\delta = 0$, we say that the algorithm satisfies $\epsilon$-differential privacy.  All of the algorithms we propose satisfy $\epsilon$-differential privacy but we compare against some algorithms that satisfy the weaker condition of $(\epsilon, \delta)$-differential privacy with non-zero $\delta$.

We achieve differential privacy by injecting noise into the statistics that are computed on the data.   Let $f$ be any function that maps datasets to $\R^d$.  The amount of noise depends on the {\em sensitivity} of $f$.  

\begin{definition}[Sensitivity] \label{def:sensitivity}
The {\em sensitivity} of a function $f$ is defined as 
$\Delta_{f} = \max_{\db} LS_{f}(\db)$
where $LS_{f}$ denotes the {\em local sensitivity} of $f$ on input $\db$ and is defined as 
$LS_{f}(\db) = \max_{\db' \in \nbrs(\db)} \| f(\db) - f(\db') \|_1.$
\end{definition}
We drop the subscript $f$ when it is clear from context.

Our approach achieves differential privacy through the application of the Laplace mechanism.
\begin{definition}[Laplace Mechanism;~\citealt{Dwork:2006aa}] \label{def:laplace}
Given function $f$ that maps datasets to $\R^d$, the Laplace mechanism is defined as $\mathcal{L}(\db) = f(\db) + \mathbf{z}$ where $\mathbf{z} = (z_1, \dots, z_d)$ and each $z_i$ is an i.i.d. random variable from $\text{Laplace}(\Delta_f/\epsilon)$.
\end{definition}

An important property of differential privacy is that any additional post-processing on the output cannot weaken the privacy guarantee.
\begin{proposition}[Post-processing;~\citealt{Dwork14Algorithmic}]
Let $\alg$ be an $(\epsilon, \delta)$-differentially 
private algorithm that maps datasets to $\R^d$ and 
let $g: \R^d \rightarrow \R^{d'}$ 
be an arbitrary function.  Then $g \circ \alg$ is also $(\epsilon, \delta)$-differentially private.
\end{proposition}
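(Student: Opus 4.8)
The plan is to reduce the claim to the differential privacy guarantee already assumed for $\alg$ by pulling the output event back through $g$. First I would fix an arbitrary pair of neighboring datasets $\db$ and $\db' \in \nbrs(\db)$, together with an arbitrary (measurable) subset $S \subseteq \mathrm{Range}(g \circ \alg) \subseteq \R^{d'}$, since these are the only objects that appear in Definition~\ref{def:dp} for the composed algorithm. The goal is then to bound $\Pr[(g \circ \alg)(\db) \in S]$ in terms of $\Pr[(g \circ \alg)(\db') \in S]$.

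The one substantive step is to observe that, because $g$ is a deterministic function, the event $\{(g \circ \alg)(\db) \in S\}$ coincides exactly with $\{\alg(\db) \in T\}$, where $T = g^{-1}(S) = \{\rr \in \R^d : g(\rr) \in S\}$ is the preimage of $S$ under $g$. Hence $\Pr[(g \circ \alg)(\db) \in S] = \Pr[\alg(\db) \in T]$, and the same identity holds with $\db'$ in place of $\db$. Now I would apply the $(\epsilon,\delta)$-differential privacy of $\alg$ to the set $T$, which gives $\Pr[\alg(\db) \in T] \le \exp(\epsilon)\,\Pr[\alg(\db') \in T] + \delta$. Substituting the preimage identity back into both sides yields $\Pr[(g \circ \alg)(\db) \in S] \le \exp(\epsilon)\,\Pr[(g \circ \alg)(\db') \in S] + \delta$, which is precisely the definition of $(\epsilon,\delta)$-differential privacy for $g \circ \alg$, and since $\db$, $\db'$, and $S$ were arbitrary, the proof is complete.

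The only point requiring a little care is measurability: for $T = g^{-1}(S)$ to be an admissible event in the guarantee for $\alg$, one needs $g$ to be measurable (equivalently, restricts the quantifier to measurable $S$), which is automatic for the deterministic post-processing maps used in this paper. I do not expect any genuine obstacle here — all of the content sits in the elementary preimage identity — but I would remark that if $g$ were itself randomized (with randomness independent of the data), the same argument extends by writing $g$ as a mixture of deterministic maps and applying the deterministic case inside the mixture; that generality is not needed for our results.
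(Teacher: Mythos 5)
Your proof is correct: the preimage identity $\Pr[(g \circ \mathcal{A})(\mathbf{X}) \in S] = \Pr[\mathcal{A}(\mathbf{X}) \in g^{-1}(S)]$ reduces the claim directly to the privacy guarantee for $\mathcal{A}$, and your measurability caveat and the remark on extending to randomized $g$ via a mixture of deterministic maps are both apt. The paper itself states this proposition as a known result with a citation and gives no proof, but your argument is exactly the standard one from the cited source (Dwork and Roth), so there is nothing to reconcile.
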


\subsection{Problem Statement}

Our goal is to learn a probabilistic model $p(\x)$ from the data set $\mathbf{X}$ while protecting the privacy of individuals.
We will learn probability distributions $p(\x)$ that are \emph{undirected
discrete graphical models} \citep[also called Markov random fields;][]{koller2009probabilistic}.
These are defined by a set of local \emph{potential functions} of the form
$\psi_C(\x_C)$, where $C \subseteq \{1, \ldots, T\}$ is an index set or
\emph{clique}, $\x_C$ is a subvector of $\x$ corresponding to $C$, and $\psi_C:
\X^{|C|} \rightarrow \R^+$ assigns a \emph{potential} value to each possible
$\x_C$. The probability model is $p(\x) = \frac{1}{Z} \prod_{C \in \C} \psi_C(\x_C)$
where $\C$ is the collection of cliques that appear in the model, and $Z =
\sum_{\x}\prod_{C \in \C} \psi_C(\x_C)$ is the normalizing constant or
\emph{partition function}. The graph $G$ with node set $V = \{1, \ldots, T\}$
and edges between any two indices that co-occur in some $C \in \C$ is the
\emph{independence graph} of the model; therefore, each index
set $C$ is a clique in $G$.

For learning, it is most convenient to express the model in log-linear or
exponential family form as:
\begin{equation}
\label{eq:log-linear-model}
p(\x ; \thetab) = \exp\Bigg\{\sum_{C \in \C} \sum_{i_C \in \X^{|C|}} \I\{\x_C = i_C\} \theta_{C}(i_C) - A(\thetab)\Bigg\}.
\end{equation}
In this expression: $\I\{\cdot\}$ is an indicator function; the variable $i_C
\in \X^{|C|}$ denotes a particular setting of the variables $\x_C$; the
\emph{parameters} $\theta_{C}(i_C) = \log \psi_{C}(i_C)$ are log-potential
values; the vector $\thetab \in \R^d$ is the concatenation of all parameters;
and $A(\thetab) = \log Z(\thetab)$ is the log-partition function, with the
dependence of $Z$ on the parameters now made explicit. Note that, for any
$\thetab \in \R^d$, the density is strictly positive: $p(\x; \thetab) > 0$ for
all $\x$. \dan{This is true because the potential values $\psi_C(i_C)$ are strictly
positive, so the log-potentials are finite.}

The goal is to learn parameters $\hat{\thetab}$ from the data $\mathbf{X}$ in a
way that is $\epsilon$-differentially private and such that $p(\x;
\hat{\thetab})$ is as accurate as possible. We will measure accuracy as
Kullback-Leibler divergence from an appropriate reference
distribution~\cite{kullback1951information}.
In synthetic experiments, we will measure the divergence $D\big( p(\cdot;
\thetab) \| p(\cdot; \hat{\thetab})\big)$, where $p(\x; \thetab)$ is the true
density. For real data, we will measure the holdout log-likelihood
$E_q\big[\log p(\x; \hat{\thetab})\big]$ where $q$ is the empirical
distribution of the holdout data, which is equal to a constant minus $D\big( q
\| p(\cdot; \hat{\thetab})\big)$.

The problem of privately selecting which cliques to include in the model (i.e.,
\emph{model selection} or \emph{structure learning}) is interesting but not
considered in this paper; we assume the cliques $\C$ are fixed in advance by
the modeler.

\section{Approach} 
\label{sec:approach}

To develop our approach to privately learn graphical model parameters, we
first discuss standard concepts related to maximum-likelihood estimation
for graphical models.

\vspace{2pt}
\textbf{Log-Likelihood, Sufficient Statistics, Marginals.} From
Eq.~\eqref{eq:log-linear-model}, the log-likelihood $\mathcal{L}(\thetab) =
\log \prod_{i=1}^N p\big(\x^{(i)}; \thetab\big)$ of the entire data set can be
written as
\begin{align*}
\mathcal{L}(\thetab)
&= \Bigg[\sum_{C \in \C} \sum_{i_C \in \X^{|C|}} n_C(i_C) \theta_{C}(i_C)\Bigg]
- N  A(\thetab)
\end{align*}
where $n_C(i_C) = \sum_{i=1}^N \I\{\x^{(i)}_C = i_C\}$ is a count of how many
times the configuration $i_C$ for the variables in clique $C$ appears in the
population. The collection of counts $\n_C = \big( n_C(i_C) \big)$ for all
possible $i_C$ is the (population) \emph{contingency table} on clique $C$. Let
$\n$ denote the vector concatenation of the contingency tables for all cliques.
Then we can rewrite the log-likelihood more compactly as
\begin{equation}
\label{eq:log-likelihood}
\mathcal{L}(\thetab) = f(\n, \thetab) := \thetab^T \n - N A(\thetab)
\end{equation}

The \dan{most common} approach for parameter learning in graphical models is maximum
likelihood estimation: find the parameters $\hat{\thetab}$ that maximize
$\mathcal{L}(\thetab)$. The resulting parameter vector $\hat{\thetab}$ is a
\emph{maximum-likelihood estimator} (MLE). It is clear from
Eq.~\eqref{eq:log-likelihood} that this problem depends on the data only
through the contingency tables $\n$.
Indeed, the clique contingency tables $\n$ are \emph{sufficient statistics} of
the model: they measure all of the information from the data set $\mathbf{X}$
that is relevant for estimating the parameter $\thetab$~\cite{Fisher:1922aa}.

The algorithmic approach for maximum-likelihood estimation in graphical models
is standard~\cite{koller2009probabilistic}, and we do not repeat the details here. However, there are
a few concepts that are important for our development.
The \emph{marginals} of a graphical model are the marginal probabilities
$\mu_C(i_C) = p(\x_C = i_C; \thetab)$ for all cliques $C$ and configurations
$i_C$. Let $\mub$ be the vector concatenation of all marginals, and note that
$\mub = \E_{\thetab}[\n]/N$. Similarly, let $\hat{\mub} = \n / N$ be the
\emph{data marginals}---these are marginal probabilities of the empirical
distribution of the data.

Marginals play a fundamental role in estimation. First, note that we can divide
Eq.~\eqref{eq:log-likelihood} by $N$ to see that the MLE only depends on the
data through the data marginals $\hat{\mub}$. However, we leave $\L(\thetab)$
in the current form because it is more convenient for the CGM development in
Section~\ref{sub:collective_graphical_models}. Second, it is well known that
$
\grad_{\thetab} \L(\thetab) = N(\hat{\mub} - \mub),
$
so maximum likelihood estimation seeks to adjust $\thetab$ so that the data and
model marginals match. Third, it can (almost) always succeed in doing so,
even if the data marginals do not come from a graphical model. More formally,
let $\M$ be the \emph{marginal polytope}: the set of all vectors $\mub$ such
that there exists some distribution $q(\x)$ with marginal probabilities $\mub$.

\begin{proposition}[\citealt{wainwright2008graphical}]
\label{prop:marginals}
For any $\mub$ in the interior of $\M$, there is a unique distribution $p(\x;
\thetab)$ with marginals $\mub$, i.e., such that $\mub = E_{\thetab}[\n]/N$.
\end{proposition}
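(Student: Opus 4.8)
The plan is to identify the distributions of the form $p(\cdot;\thetab)$ with the image of the \emph{mean map} $\thetab \mapsto \mub(\thetab) := \E_{\thetab}[\n]/N = \grad A(\thetab)$, show that this map surjects onto the interior of $\M$, and show that it is injective \emph{at the level of distributions}. Write $\phi(\x)$ for the vector of clique-indicator sufficient statistics, so that $p(\x;\thetab) = \exp\{\langle \thetab, \phi(\x)\rangle - A(\thetab)\}$, $\n = \sum_{i=1}^N \phi(\x^{(i)})$, and $\mub(\thetab) = \E_{\thetab}[\phi(\x)]$. Since $\X^T$ is finite, $\M = \mathrm{conv}\{\phi(\x) : \x \in \X^T\}$ is a polytope and everything lives in finite dimension, so no measure-theoretic care is needed.

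\textbf{Existence.} Fix $\mub$ in the interior of $\M$ and consider the maximum-entropy program $\max_q H(q)$ over distributions $q$ on $\X^T$ subject to $\E_q[\phi(\x)] = \mub$. Because $\mub$ is interior to $\M$, the feasible set is nonempty and in fact contains a strictly positive distribution, so Slater's condition holds; $H$ is continuous and the feasible set compact, so an optimizer $q^\star$ exists and is strictly positive. Writing the Lagrangian and using stationarity in $q$ forces $q^\star(\x) \propto \exp\{\langle \thetab^\star, \phi(\x)\rangle\}$ for the multiplier vector $\thetab^\star$, i.e.\ $q^\star = p(\cdot;\thetab^\star)$, and primal feasibility then gives $\mub(\thetab^\star) = \mub$. (Equivalently, one argues that the concave function $\thetab \mapsto \langle \thetab, \mub\rangle - A(\thetab)$ attains its supremum at an interior stationary point precisely because $\mub \in \mathrm{int}\,\M = \mathrm{int}\,\mathrm{cl}(\mathrm{range}\,\grad A)$, and stationarity reads $\grad A(\thetab^\star) = \mub$.)

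\textbf{Uniqueness.} Suppose $p(\cdot;\thetab_1)$ and $p(\cdot;\thetab_2)$ both have marginals $\mub$. For any distribution $q$ with $\E_q[\phi] = \mub$ and any $\thetab$, expanding the KL divergence and using $\E_q[\log p(\x;\thetab)] = \langle\thetab,\mub\rangle - A(\thetab)$ (which depends on $q$ only through $\mub$) yields the Pythagorean-type identity
\[
H(q) = \langle\thetab,\mub\rangle - A(\thetab) - D\big(q \,\|\, p(\cdot;\thetab)\big).
\]
Applying this with $\thetab = \thetab_1$ and $q \in \{p(\cdot;\thetab_1), p(\cdot;\thetab_2)\}$, and using $D(\cdot\|\cdot) \ge 0$, gives $H(p(\cdot;\thetab_2)) \le H(p(\cdot;\thetab_1))$ with equality iff $D(p(\cdot;\thetab_2)\,\|\,p(\cdot;\thetab_1)) = 0$; swapping the roles of $\thetab_1$ and $\thetab_2$ gives the reverse inequality, hence equality, hence $p(\cdot;\thetab_1) = p(\cdot;\thetab_2)$ as distributions. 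This argument sidesteps the fact that $\thetab$ itself is not unique --- the parameterization~\eqref{eq:log-linear-model} is overcomplete, so $A$ is convex but not strictly convex in $\thetab$ --- and directly delivers uniqueness of the \emph{distribution}, which is all that is claimed.

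\textbf{Main obstacle.} The delicate half is existence, and the interior hypothesis is exactly what it needs: it is what forces the max-entropy optimum to be an interior (strictly positive) point with finite multipliers, equivalently that the dual supremum is \emph{attained} rather than only approached as $\|\thetab\| \to \infty$. For $\mub$ on the boundary of $\M$ the matching distribution is merely a limit of exponential-family distributions and falls outside the family, so the statement genuinely requires $\mub \in \mathrm{int}\,\M$; the careful invocation of Slater's condition (or, in the dual picture, the relation between $\mathrm{int}\,\M$ and the domain on which the conjugate $A^*$ equals the negative entropy) is where the real work lies. Uniqueness, by contrast, is the short convexity/KL computation above.
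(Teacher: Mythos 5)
The paper offers no proof of this proposition --- it is imported wholesale from \citet{wainwright2008graphical} --- so there is no in-paper argument to compare against; your write-up is the standard argument from that reference (existence via the maximum-entropy program, with Slater's condition securing dual attainment; uniqueness via the Pythagorean identity for KL divergence), and it is essentially correct, including the correct identification of where the interior hypothesis does its work. Two small repairs. First, there is a sign slip in your identity: since $\E_q[\log p(\x;\thetab)] = \langle\thetab,\mub\rangle - A(\thetab)$ and $D(q\,\|\,p(\cdot;\thetab)) = -H(q) - \E_q[\log p(\x;\thetab)]$, the identity should read $H(q) = A(\thetab) - \langle\thetab,\mub\rangle - D(q\,\|\,p(\cdot;\thetab))$; the right-hand side is still a quantity depending on $q$ only through $\mub$, minus the divergence, so the two-sided comparison and the conclusion $p(\cdot;\thetab_1)=p(\cdot;\thetab_2)$ are unaffected. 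Second, because the indicator parameterization in Eq.~\eqref{eq:log-linear-model} is overcomplete, $\M$ has empty topological interior in $\R^d$, and ``interior'' must be read as \emph{relative} interior throughout (as the cited reference does); with that reading, your claim that the feasible set contains a strictly positive distribution is exactly the characterization of the relative interior of a polytope as the set of strictly positive convex combinations of its generators. It would also be worth one explicit line justifying strict positivity of the max-entropy optimizer $q^\star$ --- if $q^\star(\x_0)=0$, mixing with the strictly positive feasible point increases entropy at infinite initial rate because the derivative of $-t\log t$ blows up at $t=0$ --- since that positivity is what licenses writing the interior stationarity condition with finite multipliers $\thetab^\star$.
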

\dan{Applying Proposition~\ref{prop:marginals} to the data marginals
$\hat{\mub}$ shows that} if these belong to the interior of $\M$, we may learn a
distribution with marginals that match what we observe in the data. Note that,
while the \emph{distribution} $p(\x; \thetab)$ is unique, the parameters
$\thetab$ are not, because our model is overcomplete.
If $\mub$ belongs to $\M$ but not the interior of $\M$, which occurs, for
example, when some marginals are zero, the situation is more complex: there is
no (finite) $\thetab \in \R^d$ such that $p(\x; \thetab)$ has marginals
$\mub$.\footnote{However, there is a sequence $\{\thetab^k\}$ where $\thetab^k
\in \R^d$ and $\displaystyle\lim_{k \rightarrow \infty} \E_{\thetab^k}[\n]/N = \mub$.}
Similarly, the MLE does not exist, meaning that its maximum is not attained
for any finite $\thetab$~\cite{fienberg2012maximum,Haberman1973}. This issue will end up being significant in our understanding of the naive MLE approach in the following section.


%
%

\subsection{Noisy sufficient statistics} 

From the development so far, there are two obvious possibilities for
randomizing the learning process to achieve privacy: \savespace{(1) (Output perturbation) Find the MLE $\hat{\thetab}$ and add Laplace noise proportional to its sensitivity, (2) (Sufficient statistics perturbation) Add Laplace noise to the sufficient statistics $\n$, and then conduct maximum-likelihood estimation.}
\begin{enumerate}[leftmargin=*,itemsep=0pt]
\item (Output perturbation) Find the MLE $\hat{\thetab}$ and add Laplace noise
  proportional to its sensitivity.
\item (Sufficient statistics perturbation) Add Laplace noise to the sufficient
  statistics $\n$, and then conduct maximum-likelihood estimation.
\end{enumerate}

The two approaches are similar from an information bottleneck standpoint---the
dimensionality of $\n$ and $\hat{\thetab}$ is the same. However, the
sensitivity of $\hat{\thetab}$ is difficult to analyze, since it requires reasoning about worst-case inputs. It also may be high due to pathological inputs whose local sensitivity is much higher than that of realistic data sets.
On the other hand, the
sensitivity of $\n$ is very easy to analyze and the analysis is tight: the local sensitivity is the same for all data sets.

\begin{proposition}
\label{prop:sensitivity}
Let $\n(\mathbf{X})$ be the sufficient statistics of a graphical model with clique set $\C$ on data set $\mathbf{X}$. The local sensitivity of $\n$ is $|\C|$ for all inputs $\mathbf{X}$. Therefore the sensitivity of $\n$ is $|\C|$.
\end{proposition}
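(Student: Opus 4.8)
The plan is to unfold Definition~\ref{def:sensitivity} and exploit the fact that the neighboring relation $\nbrs$ used here is the ``add or remove one record'' relation, so that a single individual contributes exactly one unit of count to each clique's contingency table.

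First I would fix an arbitrary data set $\mathbf{X}$ and an arbitrary neighbor $\mathbf{X}' \in \nbrs(\mathbf{X})$. By the description of $\nbrs$ given before Definition~\ref{def:dp}, $\mathbf{X}'$ is obtained from $\mathbf{X}$ either by deleting one record or by inserting one extra record $\x \in \X^T$; since $\|\cdot\|_1$ is symmetric it suffices to analyze the insertion case (deletion follows by swapping the roles of $\mathbf{X}$ and $\mathbf{X}'$).

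Second I would examine a single clique $C \in \C$. Because $n_C(i_C) = \sum_i \I\{\x^{(i)}_C = i_C\}$ merely counts the records whose restriction to $C$ equals $i_C$, inserting $\x$ increases by exactly one the single entry indexed by $i_C = \x_C$ and leaves every other entry of the table $\n_C$ unchanged; hence $\|\n_C(\mathbf{X}) - \n_C(\mathbf{X}')\|_1 = 1$. Since $\n$ is the concatenation of the $|\C|$ clique tables, the per-clique $\ell_1$ changes simply add, giving $\|\n(\mathbf{X}) - \n(\mathbf{X}')\|_1 = \sum_{C \in \C} 1 = |\C|$.

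Finally, since this value is the same for every neighbor $\mathbf{X}'$, we obtain $LS_{\n}(\mathbf{X}) = |\C|$ regardless of $\mathbf{X}$, and therefore $\Delta_{\n} = \max_{\mathbf{X}} LS_{\n}(\mathbf{X}) = |\C|$. The argument is essentially a one-line count, so there is no real obstacle; the one place where care is needed is invoking the precise neighboring relation. With add/remove neighbors each individual touches exactly one cell of each clique table, producing the clean bound $|\C|$, whereas a ``replace one record'' notion of neighbor could change two cells per table and would instead yield $2|\C|$. I would therefore be explicit that the $\nbrs$ of the definition is the add/remove relation, and note in passing that the bound is in fact attained for every input, which is why the local sensitivity is constant and equals the global sensitivity.
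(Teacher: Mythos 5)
Your proof is correct and takes essentially the same approach as the paper's: each individual contributes exactly one count to each of the $|\C|$ clique contingency tables under the add/remove neighbor relation, so the local sensitivity is exactly $|\C|$ for every input and hence equals the global sensitivity. Your explicit remark distinguishing add/remove from replace neighbors is a useful clarification the paper leaves implicit, but the argument is the same.
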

(Proofs can be found in the supplementary material.)
So, a simple approach to achieve privacy is to release noisy sufficient
statistics $\y$ that are obtained after applying the Laplace mechanism:
\begin{equation}
\label{eq:laplace}
y_C(i_C) = n_C(i_C) + \text{Laplace}\big(|\C|/\epsilon\big)
\end{equation}




\paragraph{Positive results.} How can we learn with noisy sufficient statistics
$\y$? A naive approach is to use $\y$ in place of $\n$ in maximum-likelihood
estimation, i.e., to find $\hat{\thetab}$ to maximize $f(\y, \thetab)$. The
validity of this approach has been debated in the
literature~\cite{yang2012differential}. However, it is relatively easy to show
that it behaves well asymptotically.

\begin{proposition}
\label{prop:mse}
Assume $\x^{(1)}, \ldots, \x^{(N)}$ are drawn iid from a probability
distribution with marginals $\mub$.
\dan{The marginal estimate $\bar{\mu}_C(i_C) = \frac{1}{N}y_C(i_C)$ obtained
from the noisy sufficient statistics is unbiased and consistent, with mean squared error:}
\begin{equation}
\label{eq:mse}
\MSE\big(\bar{\mu}_C(i_C)\big) =
\frac{\mu_C(i_C)\big(1 - \mu_C(i_C)\big)}{N}  + \frac{2|\C|^2}{N^2 \epsilon^2}
\end{equation}
Now let $\hat{\thetab} \in \argmax_{\thetab}f(\y, \thetab)$ be parameters estimated
using the noisy sufficient statistics $\y$.
If the true distribution $p(\x; \thetab)$ is a graphical model with cliques
$\C$, then the estimated distribution $p(\x; \hat{\thetab})$ converges to
$p(\x; \thetab)$.
\end{proposition}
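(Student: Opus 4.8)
The plan is to dispatch the three claims in order: establish the MSE formula (which yields unbiasedness and consistency of $\bar\mu_C(i_C)$ as corollaries), and then derive convergence of $p(\x;\hat\thetab)$ from consistency plus a continuity argument.

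For the MSE, write $\bar\mu_C(i_C) = \tfrac1N\big(n_C(i_C) + z\big)$ with $z\sim\text{Laplace}(|\C|/\epsilon)$ independent of the data. Under the iid assumption, $n_C(i_C)$ is a sum of $N$ independent $\text{Bernoulli}\big(\mu_C(i_C)\big)$ indicators, so $\E[n_C(i_C)]=N\mu_C(i_C)$ and $\Var\big(n_C(i_C)\big)=N\mu_C(i_C)\big(1-\mu_C(i_C)\big)$; the Laplace variable has mean $0$ and variance $2|\C|^2/\epsilon^2$. Linearity gives $\E[\bar\mu_C(i_C)]=\mu_C(i_C)$ (unbiased), and independence of $z$ from the data gives $\Var\big(\bar\mu_C(i_C)\big)=\tfrac1{N^2}\big(\Var(n_C(i_C))+\Var(z)\big)$, which is exactly the right-hand side of~\eqref{eq:mse}. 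Since the estimator is unbiased, $\MSE=\Var$, and as this tends to $0$ with $N$, $\bar\mu_C(i_C)\to\mu_C(i_C)$ in $L^2$, hence in probability.

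For the convergence of $p(\x;\hat\thetab)$: from~\eqref{eq:log-likelihood}, $\grad_\thetab f(\y,\thetab)=\y-N\E_\thetab[\n]$, so whenever $\hat\thetab$ is attained at a finite point it obeys the moment-matching identity $\E_{\hat\thetab}[\n]/N=\y/N=:\bar\mub$; i.e., $p(\x;\hat\thetab)$ is, by Proposition~\ref{prop:marginals}, the unique exponential-family distribution whose marginals equal the noisy data marginals $\bar\mub$. By the first part, $\bar\mub\to\mub$ in probability, where $\mub=\E_\thetab[\n]/N$ are the true marginals. Because $p(\x;\thetab)>0$ for every $\x$, the vector $\mub$ lies in the (relative) interior of the marginal polytope $\M$; hence with probability tending to $1$, $\bar\mub$ lies in the interior of $\M$, the MLE exists finitely, and the moment-matching characterization applies.

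The one nontrivial step — and the main obstacle — is continuity of the map from a marginal vector in $\mathrm{int}(\M)$ to its corresponding distribution, which is what lets $\bar\mub\to\mub$ force $p(\cdot;\hat\thetab)\to p(\cdot;\thetab)$. Here I would pass to a minimal reparametrization (the model as written is overcomplete) and invoke the standard exponential-family fact that the moment map $\thetab\mapsto\E_\thetab[\n]/N$ is then a diffeomorphism onto $\mathrm{int}(\M)$, since its Jacobian is the covariance of the sufficient statistics, positive definite in a minimal parametrization; its inverse is therefore continuous, and composing with the obviously continuous map from parameters to densities on the finite set $\X^T$ gives the required continuity. The continuous mapping theorem then delivers $p(\cdot;\hat\thetab)\to p(\cdot;\thetab)$ in probability. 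What remains is only bookkeeping: the event that $\hat\thetab$ fails to be attained at a finite point has probability tending to $0$, since $\bar\mub\in\mathrm{int}(\M)$ with high probability.
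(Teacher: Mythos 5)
Your treatment of the MSE claim is exactly the paper's proof: $n_C(i_C)$ is Binomial with variance $N\mu_C(i_C)(1-\mu_C(i_C))$, the Laplace noise is independent with mean $0$ and variance $2|\C|^2/\epsilon^2$, unbiasedness follows by linearity, and MSE equals variance. Your convergence argument also follows the paper's skeleton (consistency of the noisy marginals, plus Proposition~\ref{prop:marginals}, plus interiority of $\mub$ guaranteed by strict positivity of $p(\x;\thetab)$), and you go further than the paper by spelling out the step it leaves implicit: continuity of the inverse moment map, obtained via a minimal reparametrization. That added step is the right way to turn ``marginals converge'' into ``distributions converge.''

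However, one link in your more detailed chain is false as stated: the claim that, with probability tending to one, $\bar\mub=\y/N$ lies in the interior of $\M$, so that a finite maximizer exists and moment-matches $\bar\mub$. The polytope $\M$ sits inside a proper affine subspace of $\R^d$ (each clique block sums to one, and blocks must agree on shared variables), so its interior in $\R^d$ is empty; since $\y/N$ has a continuous density, it falls off that affine subspace almost surely --- this is exactly the ``pseudo-marginals do not belong to the marginal polytope'' pathology the paper describes. Concretely, in the overcomplete parametrization, $f(\y,\thetab)$ is linear and almost surely non-constant along the direction that adds a constant to every $\theta_C(i_C)$ for a fixed clique $C$ (because $\sum_{i_C} y_C(i_C)\neq N$ a.s.), so $\argmax_\thetab f(\y,\thetab)$ is not attained, and the identity $\E_{\hat\thetab}[\n]/N=\bar\mub$ cannot hold since its left-hand side always lies in $\M$. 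The repair is the move you already invoke for the continuity step, applied one step earlier: reduce to a minimal sufficient statistic (a linear image of $\n$), carry the same linear reduction over to $\y$, and note that the reduced noisy mean vector converges to the reduced true mean, which lies in the interior of the now full-dimensional mean-parameter set; there the maximizer exists with probability tending to one, moment matching holds, and your diffeomorphism argument finishes the proof. The paper's own proof silently elides this issue, so you are not behind it --- but having chosen to make the existence and moment-matching step explicit, you must make it in the minimal parametrization for it to be true.
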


\eat{ \blue{Consider expanding footnote below into ``Remark'' here about why
  these guarantees do not carry over to \emph{learned} marginals --- because
estimation theory requires that marginals belong to the marginal polytope.} }

\begin{figure*}
\centering
\subfigure[]{
\includegraphics[height=1.3in]{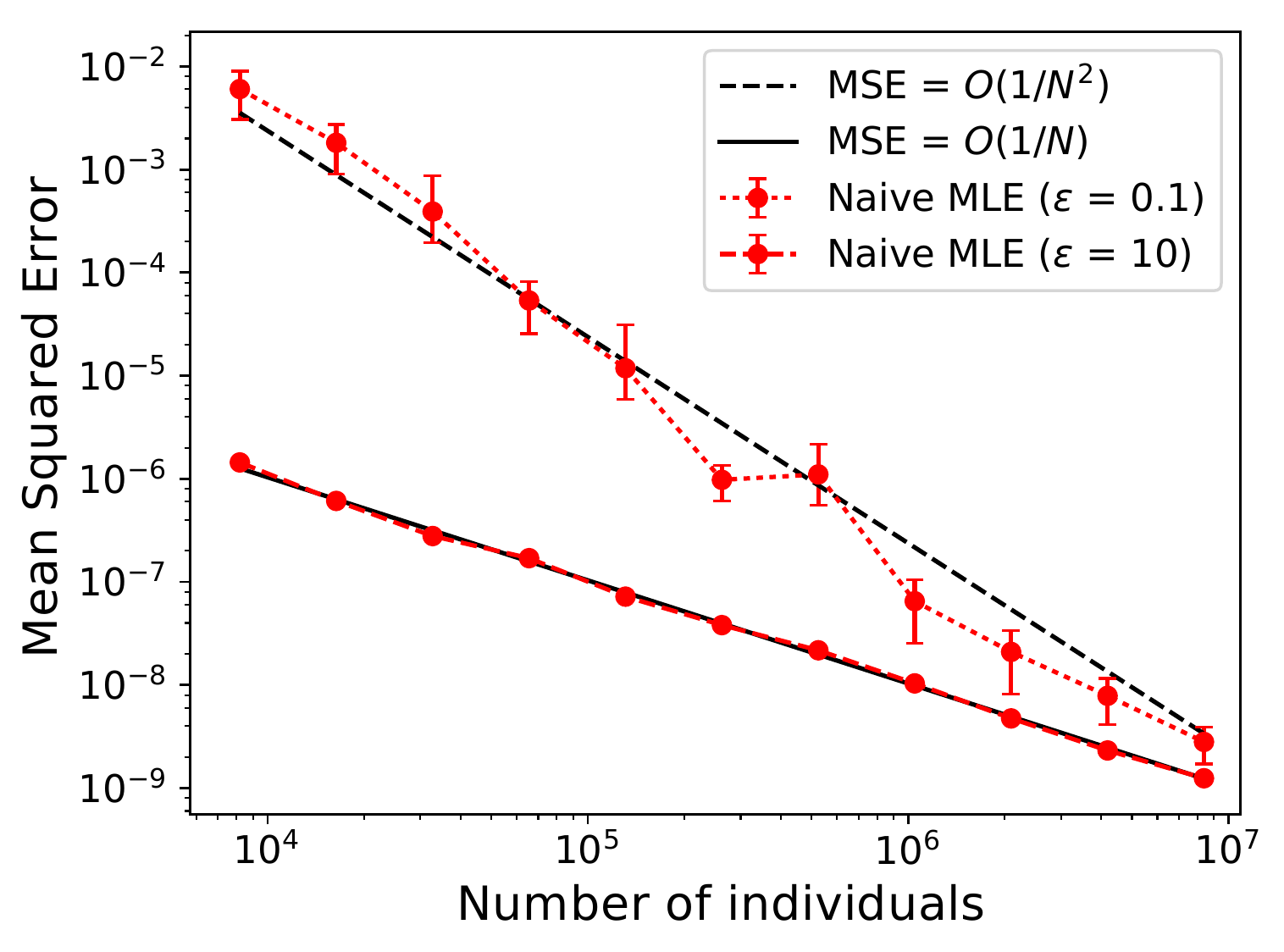}
\label{subfig:noisy-suff-stat-a}
}
\subfigure[]{
\includegraphics[height=1.3in]{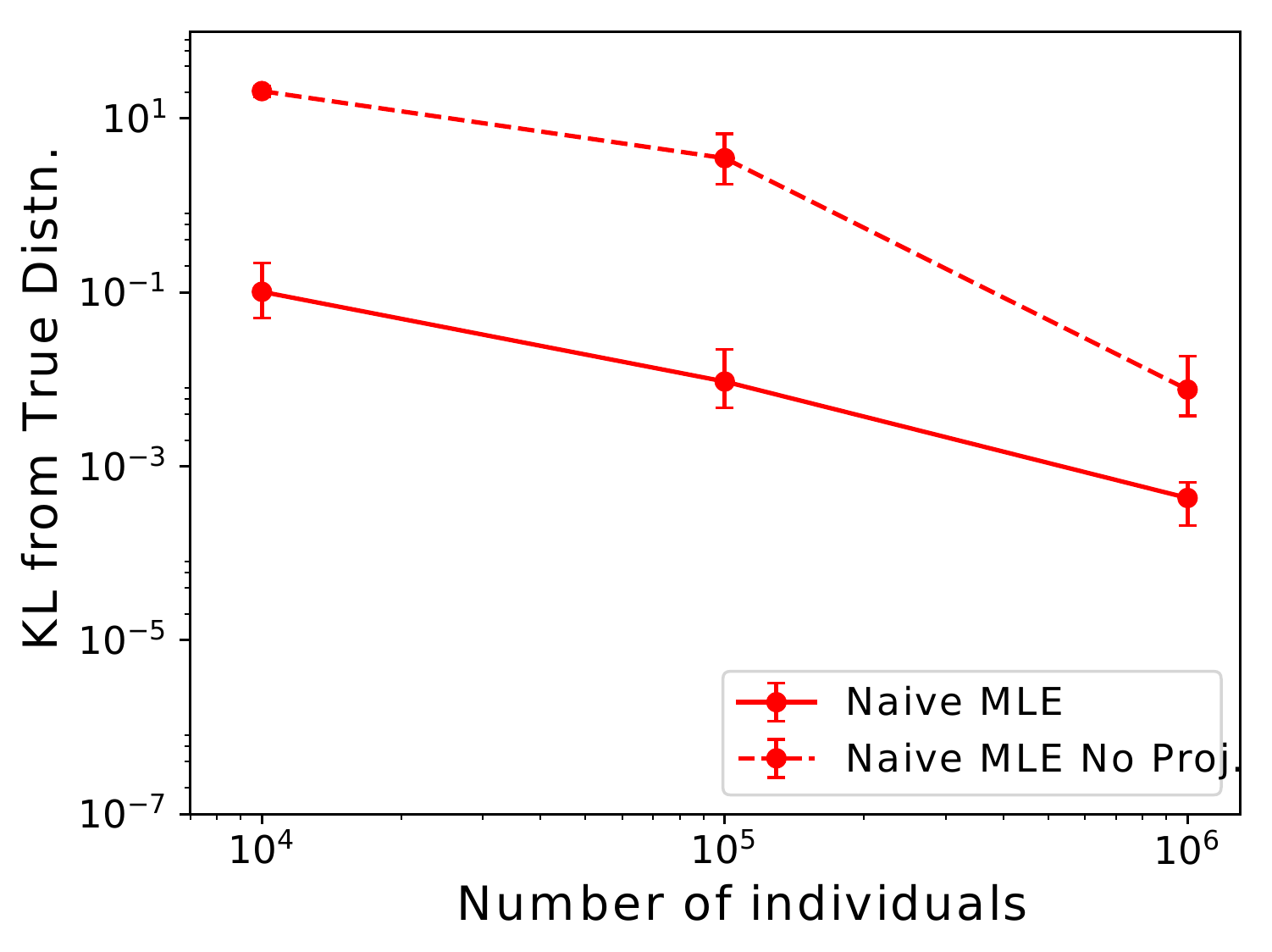}
\label{subfig:noisy-suff-stat-b}
}
\subfigure[]{
\includegraphics[height=1.3in]{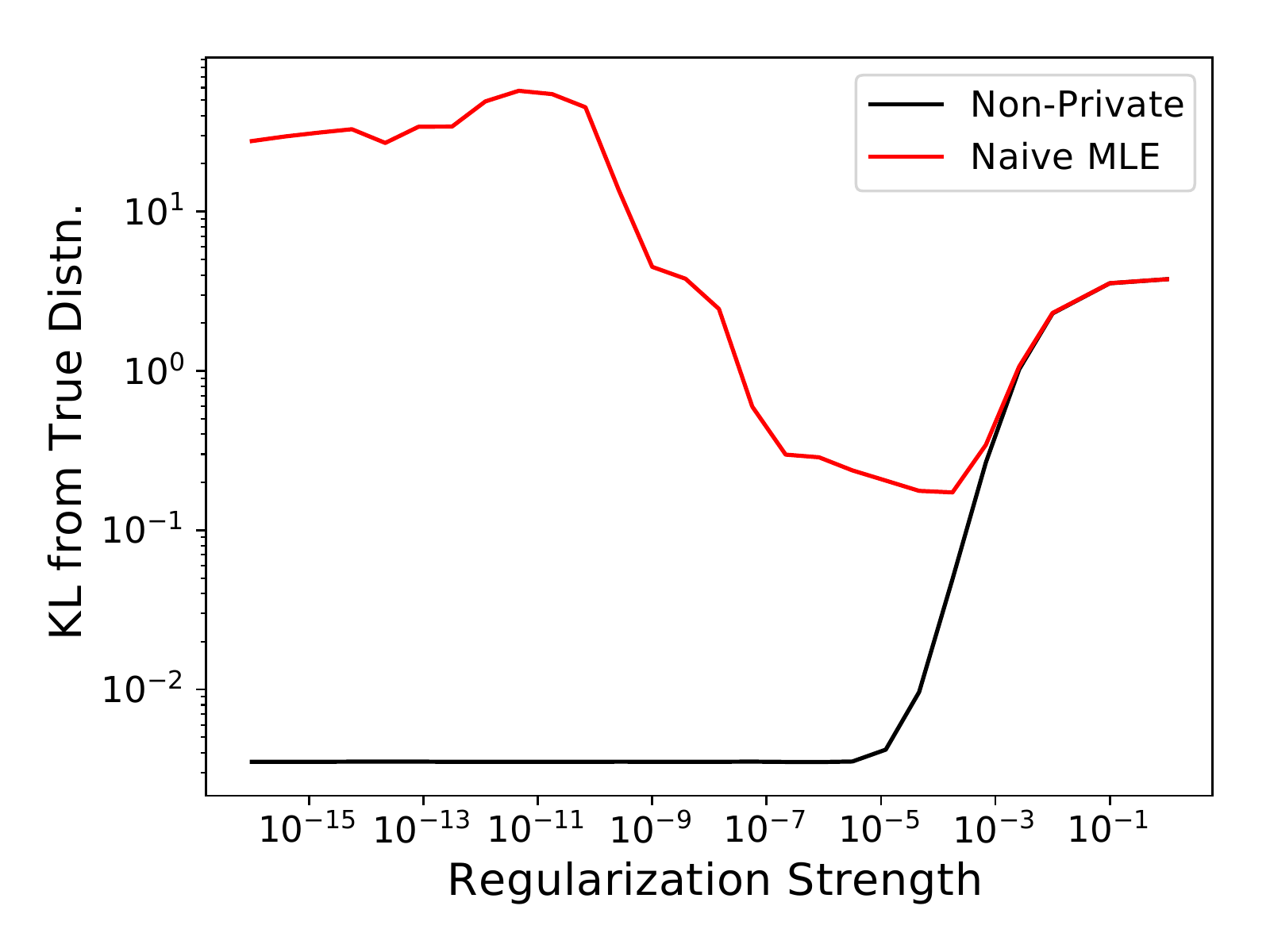}
\label{subfig:noisy-suff-stat-c}
}
\caption{Sample results on synthetic data illustrating behavior of naive MLE (see Section \ref{sub:synthetic_data} for experiment details): (a) MSE of learned marginals vs population size $N$ on a chain model with $T = 10$, $|\X|=10$; reference lines indicate predicted slope for $O(1/N)$ and $O(1/N^2)$ error terms, respectively (the function $c/N^d$ has slope $-d$ on a log-log plot), (b) effect of projecting marginals on performance of naive MLE for an Erd\H{o}s-R\'{e}yni graph with $T=10$, $|\X|=20$, $\epsilon = 0.5$, (c) effect of regularization on KL-divergence for learning with and without privacy; chain model with $T=10$, $|\X|=10$, $\epsilon = 0.1$. \label{fig:noisy-suff-stat}}
\end{figure*}

\paragraph{Pathologies.}
Asymptotically, the noisy sufficient statistics behave as desired in terms of
MSE: the $O(1/N)$ term, which is due to sampling error and not privacy,
dominates for large $N$. However, for practical settings of $\epsilon$ the
$O(1/N^2)$ term, which is due to privacy, is dominant until $N$ becomes very large,
due to the large constant $2|\C|^2/\epsilon^2$.
Figure~\ref{subfig:noisy-suff-stat-a} illustrates this issue. \savespace{For large $\epsilon$, the $O(1/N)$ sampling error is dominant; however, for smaller $\epsilon$, the $O(1/N^2)$ privacy error term is dominant even for $N$ approaching $10^7$.} \savespace{\footnote{Note
  that Proposition~\ref{prop:marginals} suggests that the MSE results for the
  \emph{estimated} marginals $\hat{\mub}$ will carry over to marginals of the
  \emph{learned} model $p(\x; \hat{\thetab})$. However the situation is
  complicated by the fact that $\hat{\mub}$ does not belong to the marginal
  polytope. Despite this, we observe in practice that the MSE of the learned
  marginals follow the predictions of Proposition~\ref{prop:mse}.}}

A second pathology is that the noise added for privacy destroys some of the
structure expected in the empirical marginals. The true data marginals
$\hat{\mub} = \n/N$ belong to the marginal polytope: in particular, this means
that each clique marginal $\hat{\mub}_C$ is nonnegative and sums to one, and
that clique marginals agree on common subsets of variables. After adding noise,
the \emph{pseudo}-marginals $\bar{\mub} = \y/N$ do not belong to the marginal
polytope: $\bar{\mub}$ may have negative values, and does not satisfy
consistency constraints. We find that a partial fix is very helpful
empirically: project the pseudo-marginal $\bar{\mub}_C$ for each clique onto
the simplex prior to conducting MLE, which can be done via a standard
procedure~\cite{Duchi:2008aa}. Let $\tilde{\mub}$ be the projected marginals. We
now have that $\tilde{\mub}_C$ is a valid marginal for each clique $C$, but
consistency constraints are not satisfied among cliques, and it is still the
case that $\tilde{\mub} \notin \M$. Figure~\ref{subfig:noisy-suff-stat-b}
illustrates the benefits of projection on the quality of the model learned by Naive MLE.

A more significant pathology has to do with zeros in the projected marginals
$\tilde{\mub}$, which are more prevalent than in true data marginals
$\hat{\mub}$. This is because the addition of Laplace noise creates negative
values, which are then truncated to zero during projection.
As discussed following Proposition~\ref{prop:marginals}, zero values in the
marginals lead to non-existence of the
MLE~\cite{fienberg2012maximum,Haberman1973}.
If $\tilde{\mu}_C(i_C) = 0$, the likelihood increases monotonically as $\theta_C(i_C)$ goes to negative
infinity; in other words, the model attempts to drive the learned marginal
probability to zero.
Numerically, we can address this by regularization, e.g., adding  $\lambda
\|\thetab\|^2$ to the objective function for arbitrarily small $\lambda > 0$.
However, we may still learn vanishingly small marginal probabilities, which can
lead to a very large KL-divergence between the true and learned models.
Figure~\ref{subfig:noisy-suff-stat-c} illustrates the effect of $\lambda$ on
KL-divergence with both noisy sufficient statistics and true sufficient
statistics. At high $\lambda$ (strong regularization), both methods underfit and yield poor KL divergence. Learning with true sufficient statistics has no tendency to overfit; it achieves good performance for a broad range of $\lambda$ approaching zero. Naive MLE with noisy sufficient statistics overfits badly (to zeros) for small $\lambda$, and must be tuned ``just right'' to achieve reasonable performance.



\subsection{Collective Graphical Models} 
\label{sub:collective_graphical_models}

Since learning with noisy sufficient statistics ``as-is'' 
has several pathologies and
is less robust than maximum-likelihood estimation in the absence of privacy, we investigate 
a more principled approach, which matches the data generating process:  We
treat the true sufficient statistics $\n$ as latent variables, and learn
$\thetab$ to maximize the \emph{marginal} likelihood $p(\y ; \thetab) =
\sum_{\n} p(\n, \y; \thetab)$. In this section, we will develop an EM approach
to accomplish this.

\renewcommand{\mid}{\,|\,}
In EM, we need to conduct inference to compute $\E[\n \mid \y; \thetab]$ for a
fixed value of $\thetab$.
\dan{This is the central problem of \emph{collective graphical models} (CGMs)~\cite{Sheldon:2011aa}.
Consider the joint distribution $p(\n, \y; \thetab) = p(\n; \thetab) p(\y \mid \n)$,
which we use to compute $\E[\n \mid \y; \thetab]$.
The noise mechanism $p(\y \mid \n)$ arises directly from the Laplace mechanism (see Eq.~\eqref{eq:laplace}).
The distribution of the sufficient statistics, $p(\n; \thetab)$, is
known as the \emph{CGM distribution}. It can be written in closed form when
the model is \emph{decomposable}, i.e., the cliques $\C$ correspond to the nodes of some junction tree $\T$.}
Although decomposability is a significant restriction, let us assume that such a tree $\T$ exists;
we will use the exact results derived for this case to develop an approximation
for the general case. Let $\S$ be the set of separators of $\T$, and let
$\nu(S)$ be the multiplicity of $S \in \S$, i.e., the number of distinct edges
$(C_i, C_j) \in \T$ for which $S = C_i \cap C_j$. Under these assumptions, the
CGM distribution has the form~\cite{Liu:2014aa}:
\begin{align*}
p(\n; \thetab) &= h(\n) \cdot \exp\big(f(\n, \thetab) \big), \\
h(\n) &= N! \cdot
      \frac{\displaystyle \prod_{S \in \S} \prod_{i_S \in \X^{|S|}} \big(n_S(i_S)!\big)^{\nu(S)}}
           {\displaystyle \prod_{C \in \C} \prod_{i_C \in \X^{|C|}} n_C(i_C)!} \cdot \I\{\n \in \M^{\mathbb{Z}}_N\}
\end{align*}
The term $\exp\big(f(\n, \thetab)\big)$ is the probability of an ordered data
set $\mathbf{X}$ with sufficient statistics $\n$, as discussed previously. The
term $h(\n)$ is a base measure that counts the number of ordered data sets with
sufficient statistics equal to $\n$, and enforces constraints on $\n$. The
\emph{integer-valued marginal polytope} $\M^{\Z}_N$ is the set of all vectors
$\n$ that are sufficient statistics of some data set $\mathbf{X}$ of size $N$.

Exact inference in CGMs is intractable~\cite{Sheldon:2013aa}. Therefore, it is
typical to relax the integrality constraint and apply Stirling's approximation:
$\log n! \approx n \log n - n$. Let $\M_N$ be the feasible set with the
integrality constraint removed, which is now just the standard marginal
polytope scaled so that each marginal sums to $N$ instead of one.

\begin{proposition}[\citealt{Sun:2015aa,nguyen2016approximate}]
\label{prop:cgm-approx}
For a decomposable CGM with junction tree $\T$, the following approximation of the CGM
log-density for any $\n \in \M_N$ is obtained by applying Stirling's
approximation:
\begin{equation}
\label{eq:cgm-approx}
\log p(\n, \y; \thetab) \approx \thetab^T \n - N A(\thetab) + H(\n) + \log p(\y| \n).
\end{equation}
Here, $H(\n) = -N\sum_{\x} q(\x) \log q(\x)$ is the entropy of the unique
distribution $q(\x) = p(\x; \thetab)$ in the graphical model family with marginals
equal to $\n/N$.
\end{proposition}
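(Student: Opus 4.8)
The plan is to reduce the whole statement to a Stirling approximation of the combinatorial base measure $h(\n)$, since the exponential factor of the CGM density is already \emph{exactly} $\exp\!\big(f(\n,\thetab)\big)$ with $f(\n,\thetab)=\thetab^T\n - NA(\thetab)$, and the noise factor $p(\y\mid\n)$ is left untouched. Concretely, write $\log p(\n,\y;\thetab) = \log p(\n;\thetab) + \log p(\y\mid\n) = \log h(\n) + \thetab^T\n - NA(\thetab) + \log p(\y\mid\n)$. Comparing with the claimed right-hand side, it suffices to prove that, after dropping the indicator $\I\{\n\in\M^{\Z}_N\}$ (which is precisely the relaxation from $\M^{\Z}_N$ to $\M_N$) and replacing every $\log m!$ appearing in $h(\n)$ by $m\log m - m$, the result equals $H(\n)$ for all $\n\in\M_N$.

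First I would expand $\log h(\n) = \log N! + \sum_{S\in\S}\nu(S)\sum_{i_S}\log n_S(i_S)! - \sum_{C\in\C}\sum_{i_C}\log n_C(i_C)!$ and apply Stirling term by term:
\[
\log h(\n) \approx (N\log N - N) + \sum_{S\in\S}\nu(S)\sum_{i_S}\big(n_S(i_S)\log n_S(i_S) - n_S(i_S)\big) - \sum_{C\in\C}\sum_{i_C}\big(n_C(i_C)\log n_C(i_C) - n_C(i_C)\big).
\]
The purely linear ($-m$) contributions collect to $-N - N\sum_{S}\nu(S) + N|\C|$; using that $\sum_{i_C} n_C(i_C) = \sum_{i_S} n_S(i_S) = N$ for $\n\in\M_N$ and that a junction tree on $|\C|$ cliques has $\sum_{S\in\S}\nu(S) = |\C|-1$ edges, this equals $-N - N(|\C|-1) + N|\C| = 0$, so these terms cancel. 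This leaves $\log h(\n)\approx N\log N + \sum_S\nu(S)\sum_{i_S}n_S(i_S)\log n_S(i_S) - \sum_C\sum_{i_C}n_C(i_C)\log n_C(i_C)$.

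Next I would identify this expression with $H(\n)$. Set $\mub = \n/N$; for a decomposable model the unique distribution $q$ in the graphical-model family with marginals $\mub$ (existence and uniqueness by Proposition~\ref{prop:marginals} applied to $\mub$) has the junction-tree factorization $q(\x) = \prod_{C\in\C}\mu_C(\x_C)\big/\prod_{S\in\S}\mu_S(\x_S)^{\nu(S)}$. Taking $-\sum_\x q(\x)\log q(\x)$, distributing the sum, and marginalizing each summand down onto its clique or separator yields the decomposable-entropy identity $-\sum_\x q\log q = -\sum_C\sum_{i_C}\mu_C(i_C)\log\mu_C(i_C) + \sum_S\nu(S)\sum_{i_S}\mu_S(i_S)\log\mu_S(i_S)$. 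Multiplying by $N$ and substituting $\mu_C(i_C) = n_C(i_C)/N$ and $\mu_S(i_S) = n_S(i_S)/N$, the $\log N$ bookkeeping totals $|\C|N\log N - (|\C|-1)N\log N = N\log N$, and one lands exactly on the Stirling expression above; hence $\log h(\n)\approx H(\n)$, which completes the argument.

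The only delicate point is constant bookkeeping: the $-N$, the $N\log N$, and the $\pm N|\C|\log N$ terms must be tracked carefully through both the Stirling step and the substitution $\mu = \n/N$, and their cancellation relies specifically on the identities $\sum_{i_C} n_C(i_C) = N$ and $\sum_{S\in\S}\nu(S) = |\C|-1$. Everything else — the junction-tree factorization of $q$ and the marginalization of each entropy term — is routine once decomposability is assumed.
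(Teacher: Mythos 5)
Your proposal is correct and follows essentially the same route as the paper's proof: apply Stirling's approximation to $\log h(\n)$, check that the linear $-m$ terms cancel using $\sum_{i_A} n_A(i_A)=N$ and $\sum_{S}\nu(S)=|\C|-1$, and identify the surviving expression with $N$ times the entropy of the junction-tree-factorized distribution $q$ with marginals $\n/N$ (the paper runs this last identification in the opposite direction, rewriting each scaled term as $NH_A - N\log N$ and then citing the standard entropy decomposition, but the computation is the same). The only cosmetic difference is that you derive the intermediate identity $\log h(\n)\approx N\log N + \sum_S\nu(S)\sum_{i_S}n_S\log n_S - \sum_C\sum_{i_C}n_C\log n_C$ from scratch, where the paper cites it from prior work.
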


Proposition~\ref{prop:cgm-approx} is the basis for \emph{approximate MAP
inference problem in CGMs}: find $\n$ to maximize Eq.~\eqref{eq:cgm-approx} and
obtain an approximate mode of $p(\n \mid \y; \thetab)$.
\dan{Even though our goal is to compute the \emph{mean} $\E[\n \mid \y; \thetab]$, it has been
shown that the approximate mode, which is also a real-valued vector, is an excellent
approximation to the mean for use within the EM algorithm~\cite{Sheldon:2013aa}.
Note that for non-decomposable models, we will simply apply the same approximation as in
Proposition~\ref{prop:cgm-approx}, even though an exact expression for the
counting measure $h(\n)$, and therefore the correspondence of $\log h(\n)$ to
an entropy $H(\n)$, is not known in this case.
}
Then, after dropping the term $N A(\thetab)$ from
Proposition~\ref{prop:cgm-approx}, which is constant with respect to $\n$, the
approximate MAP problem can be rewritten as:
\begin{equation}
\label{eq:approximate-map}
\n^* \in \argmax_{\n \in \M_N}\ \thetab^T \n + H(\n) + \log p(\y \mid \n)
\end{equation}
This equation reveals a close connection to variational principles for
graphical models~\cite{wainwright2008graphical}. It is identical to the
variational optimization problem for marginal inference in standard graphical
models, except the objective has an additional term $\log p(\y | \n)$, which
is non-linear in $\n$. Several message-passing based algorithms have been
developed to efficiently solve the approximate MAP problem. For trees or
junction trees, Problem~\eqref{eq:approximate-map} is convex as long as $\log
p(\y |\n)$ is concave in $\n$ (which is true in most cases of interest, such as Laplace noise) so it can be solved
exactly~\cite{Sun:2015aa,vilnis2015bethe}. For loopy models, both the entropy
$H(\n)$ and the feasible set $\M_N$ must be
approximated~\cite{nguyen2016approximate}.

Algorithm~\ref{alg:nlbp} shows pseudocode \emph{non-linear belief
propagation} \citep[NLBP;][]{Sun:2015aa}, which we select as our primary inference
approach due to its simplicity.
It is a thin wrapper around standard BP, and can be applied to trees, in which
case it exactly solves Problem~\eqref{eq:approximate-map}, or it can be applied
to loopy graphs by using loopy BP (LBP) as the subroutine, in which case it is
approximate.

Our final EM learning procedure is shown in Algorithm~\ref{alg:em}. It
alternates between inference steps that solve the approximate MAP problem to
find $\n_t \approx \E[\n \mid \y; \theta_t]$, and optimization steps to
re-estimate parameters given the inferred sufficient statistics $\n_t$. See
also~\cite{Sheldon:2013aa,Liu:2014aa,Sun:2015aa}.

\newcommand{\nlbp}[1]{
\begin{algorithm}[#1]
\caption{Non-Linear Belief Propagation (NLBP)}
\label{alg:nlbp}
\begin{algorithmic}
\REQUIRE $\thetab$, $\y$, damping parameter $\alpha > 0$
\WHILE{$\neg$ converged}
\STATE $\thetab' \leftarrow \thetab + \grad_\n \log p(\y \mid \n)$
  \vskip 2pt
\STATE $\n' \leftarrow \text{STANDARD-BP}\big( \thetab' \big)$ \COMMENT{\small{Normalized to sum to $N$}}
  \vskip 2pt
\STATE  $\n \leftarrow (1-\alpha)\n + \alpha \n'$
\ENDWHILE
\end{algorithmic}
\end{algorithm}
}

\newcommand{\emalg}[1]{
\begin{algorithm}[#1]
\caption{EM for CGMs}
\label{alg:em}
\begin{algorithmic}
\REQUIRE Noisy sufficient statistics $\y$
\STATE Initialize $\thetab_0$ arbitrarily
\WHILE{$\neg$ converged}
\STATE $\n_t \leftarrow \text{NLBP}(\thetab_t, \y)$
  \vskip 2pt
\STATE $\thetab_{t+1} \leftarrow \argmax_{\thetab} \thetab^T \n_t - N A(\thetab)$
  \vskip 2pt
\ENDWHILE
\end{algorithmic}
\end{algorithm}
}

\nlbp{t}
\emalg{t}




\section{Experiments} 
\label{sec:experiments}

We conduct a number of experiments on synthetic and real data to evaluate the quality of models learned by both Naive~MLE and CGM.

\textbf{Methods.} 
\label{sub:private_stochastic_gradient_descent}
We compare three algorithms: Naive MLE, CGM, and a version of private stochastic gradient descent (PSGD) due to \citet{Abadi:2016aa}.
PSGD belongs to a class of general-purpose private learning algorithms that can be adapted to our problem, including gradient descent or stochastic gradient descent algorithms for empirical risk minimization~\cite{Chaudhuri:2011aa,kifer2012private,Jain:2013aa,bassily2014private,Abadi:2016aa} and the subsample-and-aggregate approach for parameter estimation~\cite{smith2011privacy}. We chose PSGD because it is a state-of-the-art method and it significantly outperformed other approaches in preliminary experiments.
However, note that PSGD satisfies only \emph{$(\epsilon, \delta)$-differential privacy} for $\delta>0$, which is a weaker privacy guarantee than $\epsilon$-differential privacy. We tune PSGD using a grid search over all relevant parameters to ensure it performs as well as possible.




\subsection{Synthetic data} 
\label{sub:synthetic_data}

\begin{figure*}[t]
        \centering


        \subfigure[$\epsilon = 0.01$]{
            \includegraphics[width=.23\textwidth]{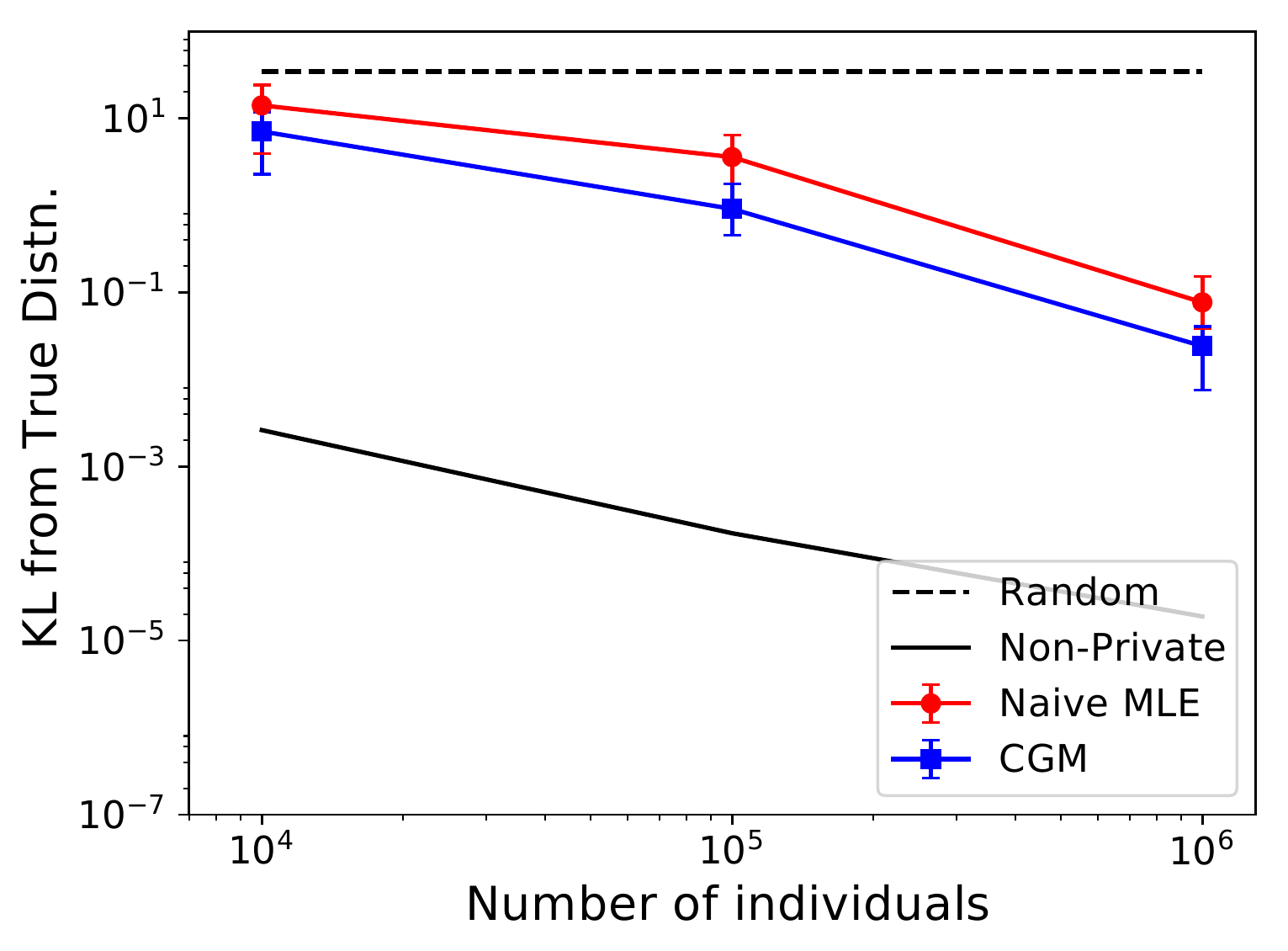}
            \label{subfig:thirdchain_3}
        }
        \subfigure[$\epsilon = 0.1$]{
            \includegraphics[width=.23\textwidth]{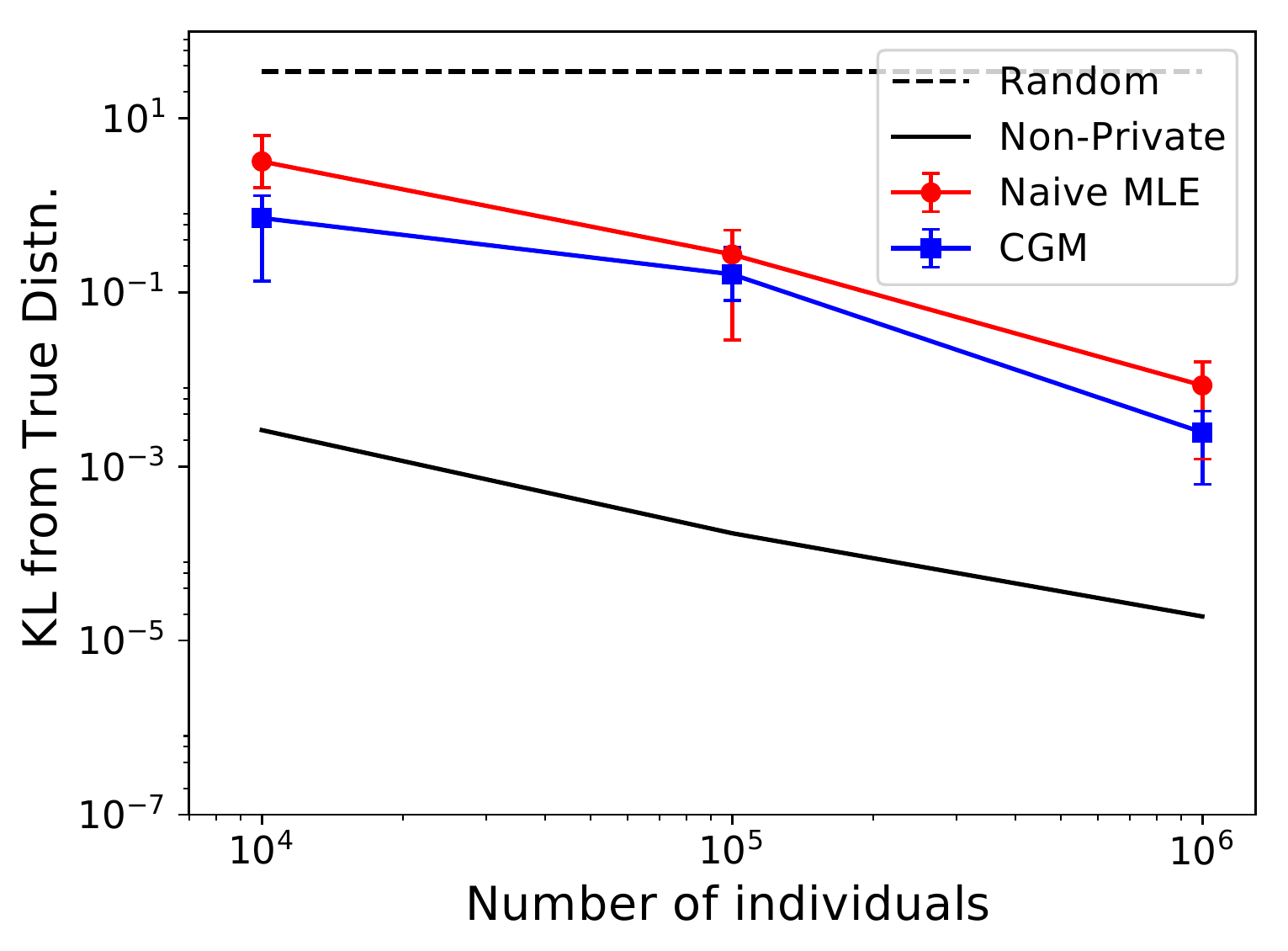}
            \label{subfig:thirdchain_2}
        }
        \subfigure[$\epsilon = 0.5$]{
            \includegraphics[width=.23\textwidth]{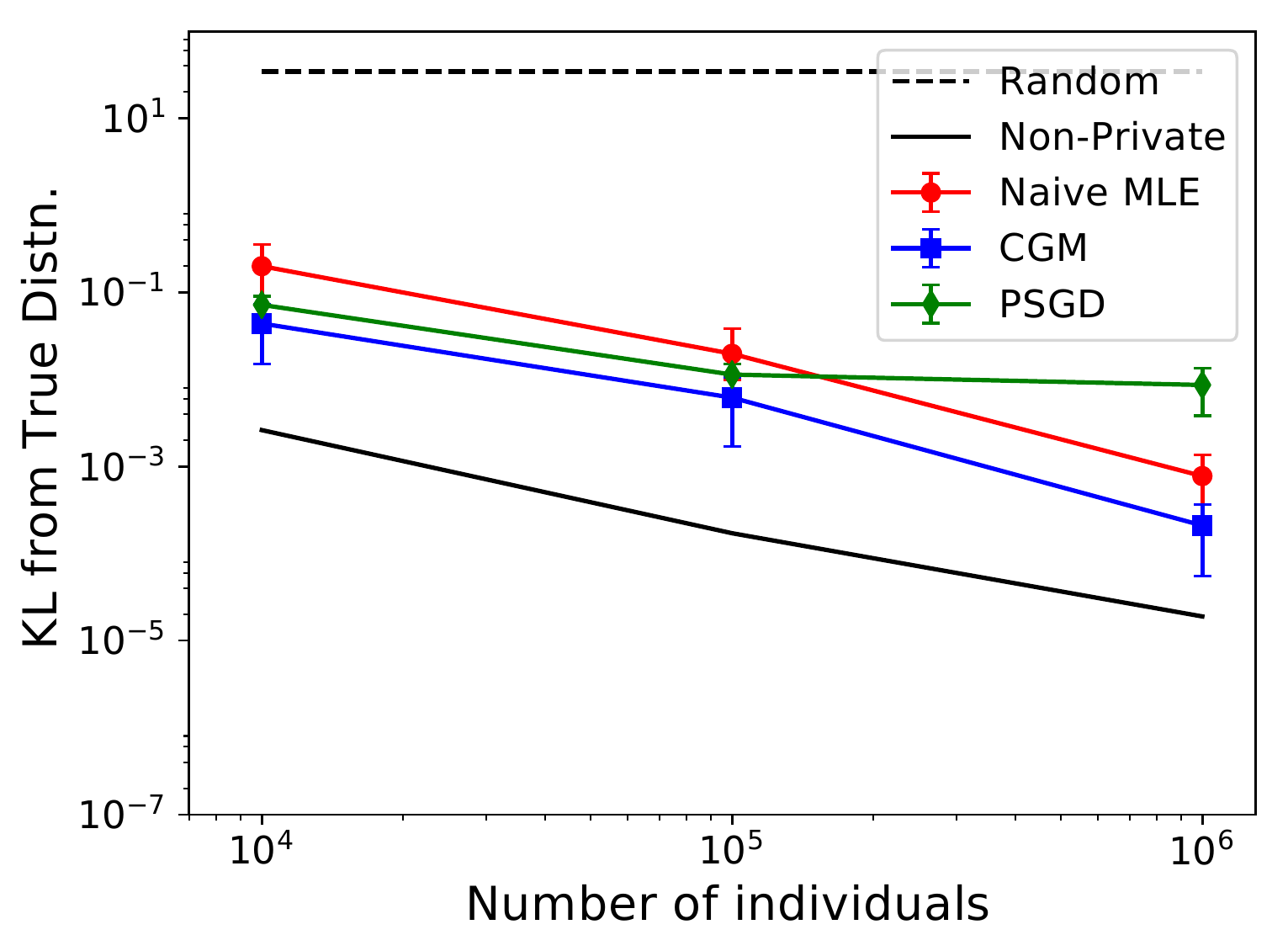}
            \label{subfig:thirdchain_1}
        }        
        \subfigure[$\epsilon = 1.0$]{
            \includegraphics[width=.23\textwidth]{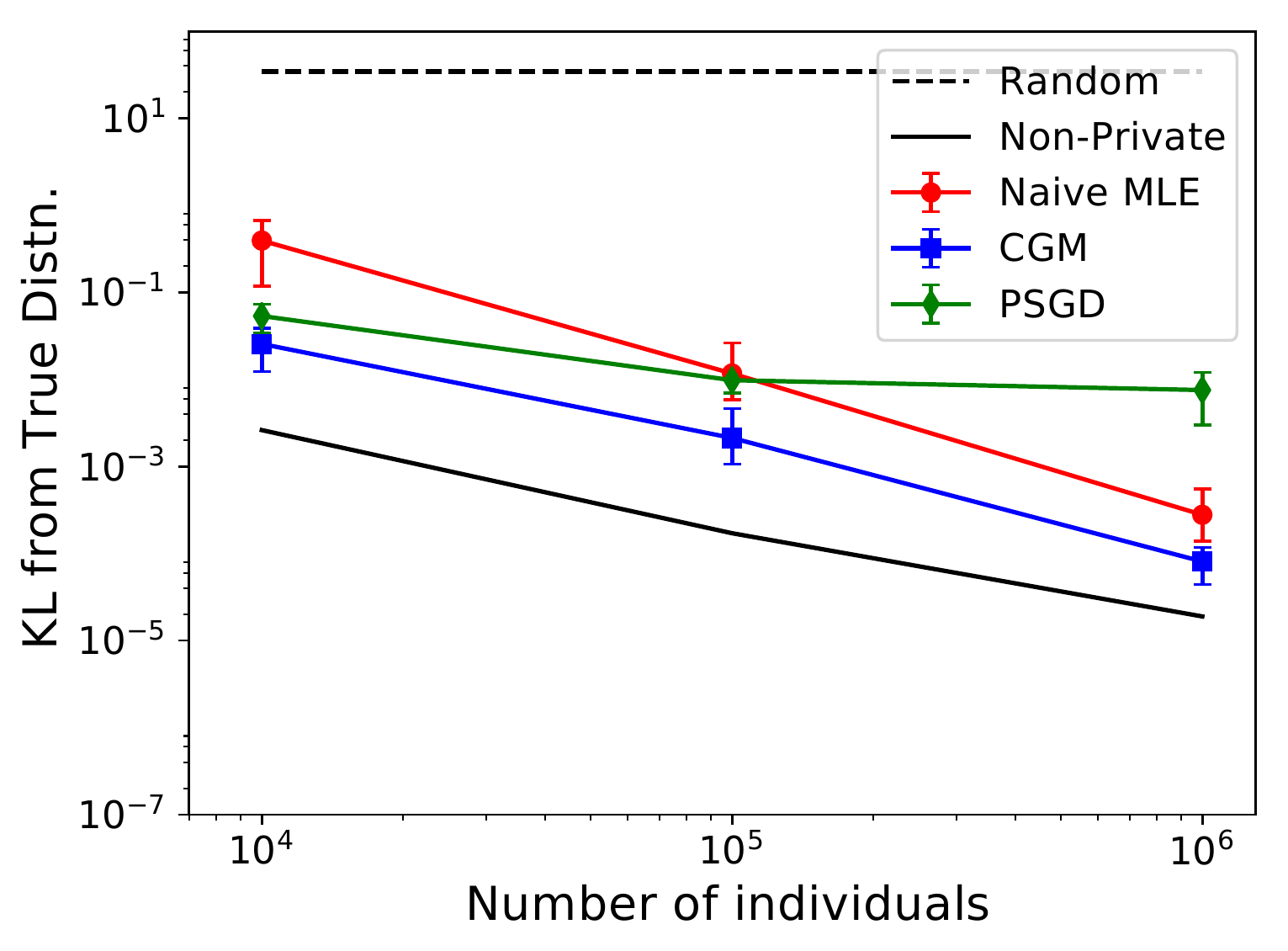}
            \label{subfig:thirdchain_0}
        }

        \subfigure[$\epsilon = 0.01$]{
            \includegraphics[width=.23\textwidth]{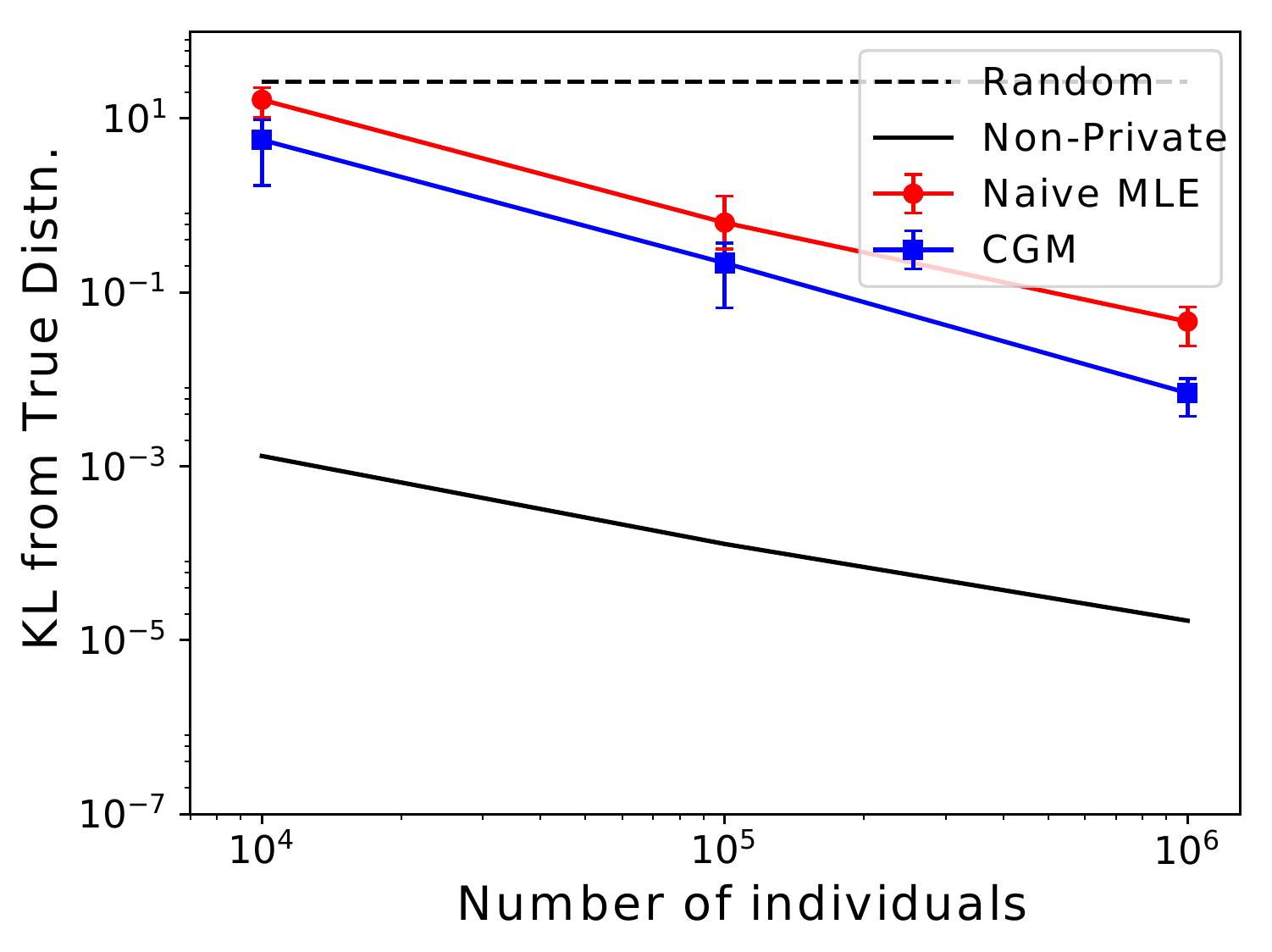}
            \label{subfig:erdosreyni_3}
        }
        \subfigure[$\epsilon = 0.1$]{
            \includegraphics[width=.23\textwidth]{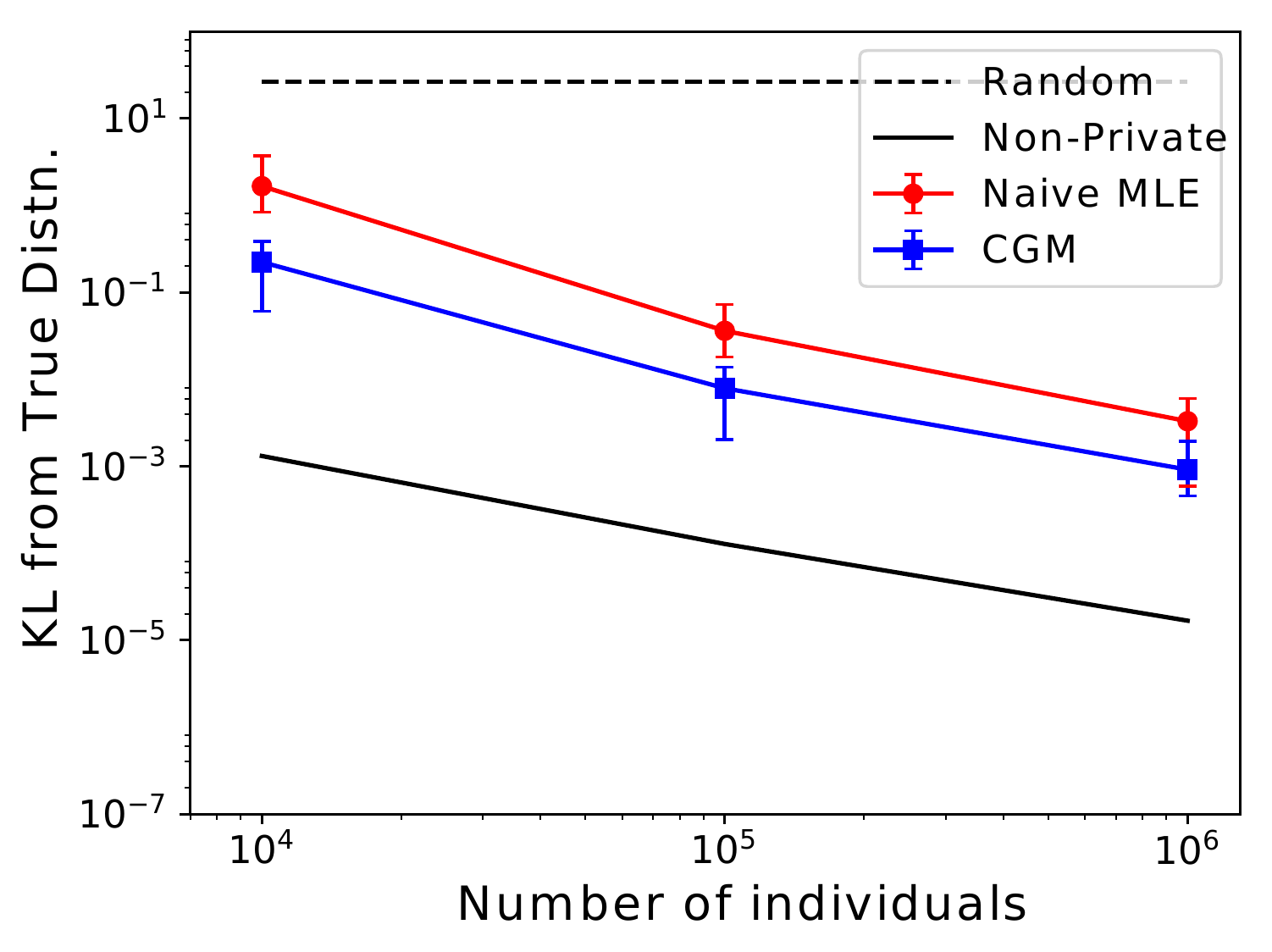}
            \label{subfig:erdosreyni_2}
        }
        \subfigure[$\epsilon = 0.5$]{
            \includegraphics[width=.23\textwidth]{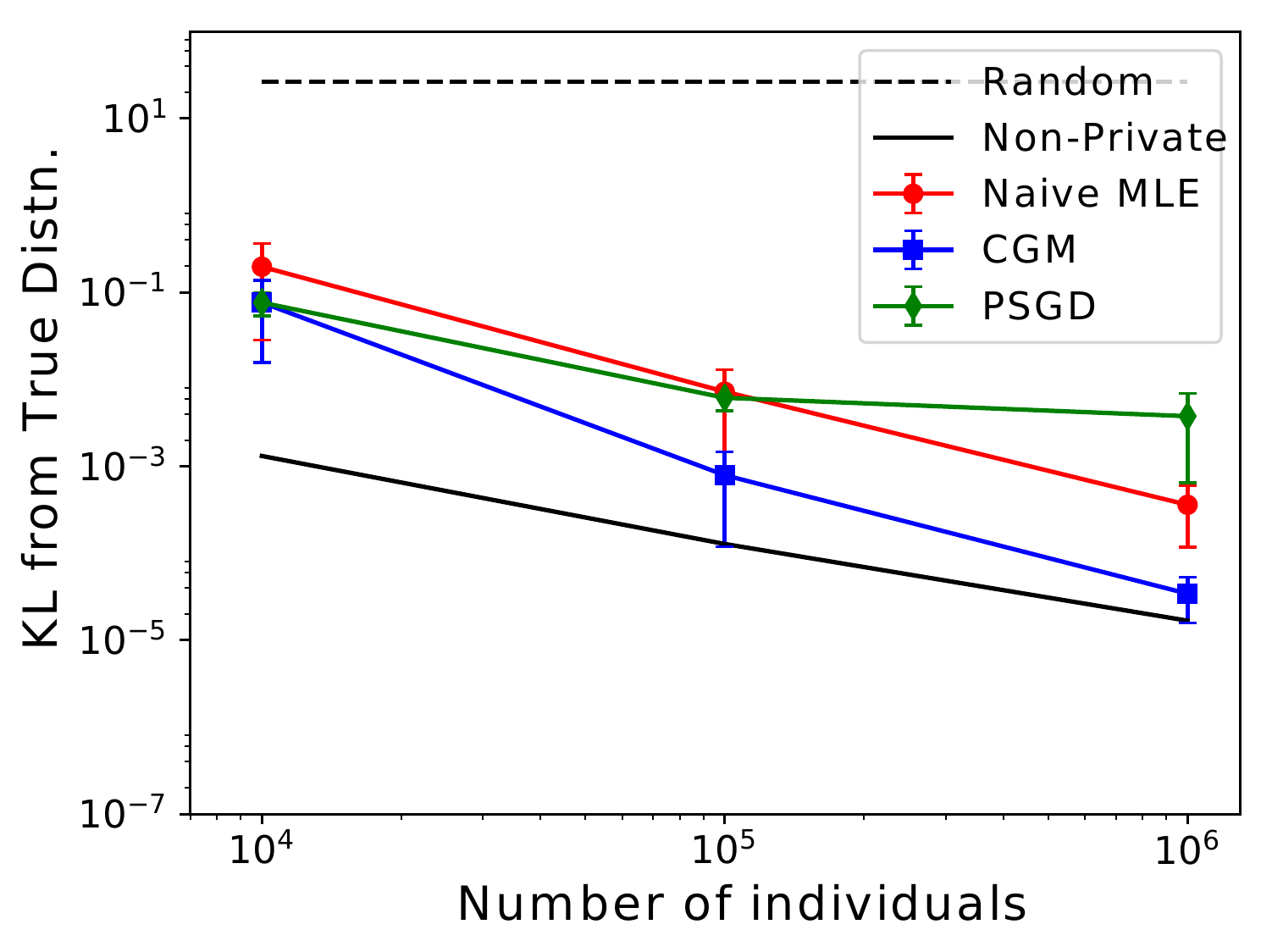}
            \label{subfig:erdosreyni_1}
        }        
        \subfigure[$\epsilon = 1.0$]{
            \includegraphics[width=.23\textwidth]{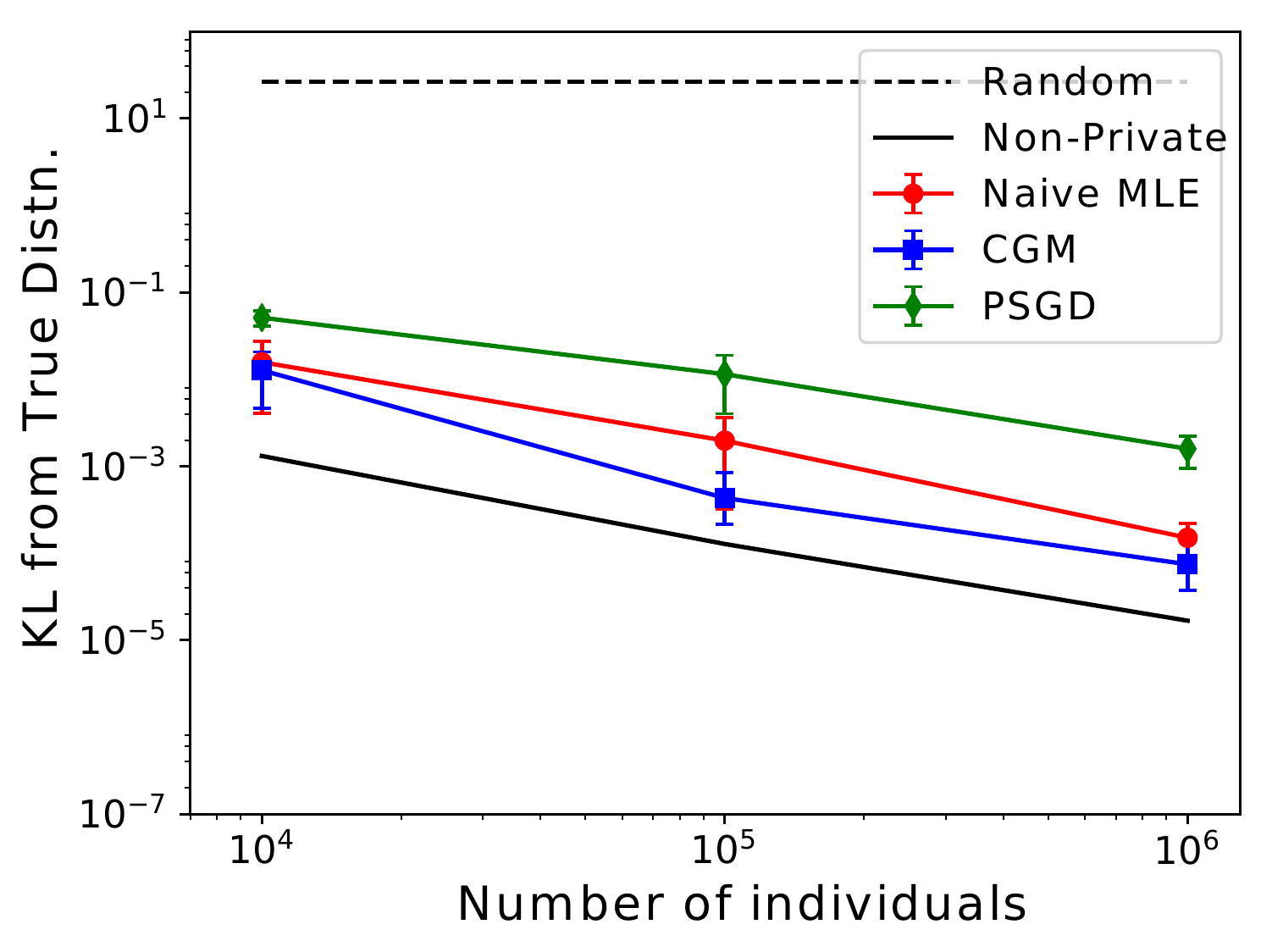}
            \label{subfig:erdosreyni_0}
        }
                
        \caption{Results on synthetic data generated from third-order chains, \subref{subfig:thirdchain_3}--\subref{subfig:thirdchain_0}, and connected Erd\H{o}s-R\'{e}yni random graphs \subref{subfig:erdosreyni_3}--\subref{subfig:erdosreyni_0}. Each column represents a different privacy level. Lower $\epsilon$ signifies stricter privacy guarantees. The x-axis measures population size. The y-axis is KL divergence from the true distribution. \label{fig:full_synthetic}}
    \end{figure*}

We evaluate two types of pairwise graphical models: third-order chains with edges between two nodes $i$ and $j$ if $1 \leq |i-j| \leq 3$, and (connected) Erd\H{o}s-R\'{e}yni (ER) random graphs. We report results for graphs of 10 nodes, where potentials on each edge are drawn from a Dirichlet distribution with concentration parameter of one; results are similar for smaller and larger models, models with different structures, and for different types of potentials. We vary data size $N$ and privacy parameter $\epsilon$. For each setting of model type, $N$, and $\epsilon$, we conduct 25 trials. The trials are nested, with five random populations and five replications per population, i.e.: $\n_i \sim p(\n), \y_{i,j} \sim p(\y \mid \n_i)$ for $i \in \{1,\ldots,5\}, j \in \{1, \ldots, 5\}$.
We measure the quality of learned models using KL divergence from the true distribution, and include for comparison two reference models: a random estimator and a non-private MLE estimator. The random estimator is obtained by randomly generating marginals $\bar{\mub}$ and then learning potentials via MLE.

\textbf{Results.} Figure~\ref{fig:full_synthetic} shows the results for the two models (top: third-order chain, bottom: ER) for different values of $N$ and $\epsilon$.
CGM improves upon Naive MLE for all models, privacy levels, and population sizes.
Recall that PSGD promises only $(\epsilon,\delta)$-differential privacy.  \savespace{Furthermore, because of the nature of this algorithm, there is an interplay between these two parameters and only some combinations are feasible.}  While $\delta$ is often assumed to be ``cryptographically small'', e.g., $O(2^{-N})$, we set $\delta$ to a relatively large value of $\delta = 1/N$.  Increasing $\delta$ weakens the privacy guarantee but enables PGSD to run on a wider range of $\epsilon$. However, even with this setting for $\delta$, some of the smaller values of $\epsilon$ are not attainable by PGSD and are omitted from those plots.

   \begin{figure}[t]
        \centering
        \subfigure[]{
            \includegraphics[width=.24\textwidth]{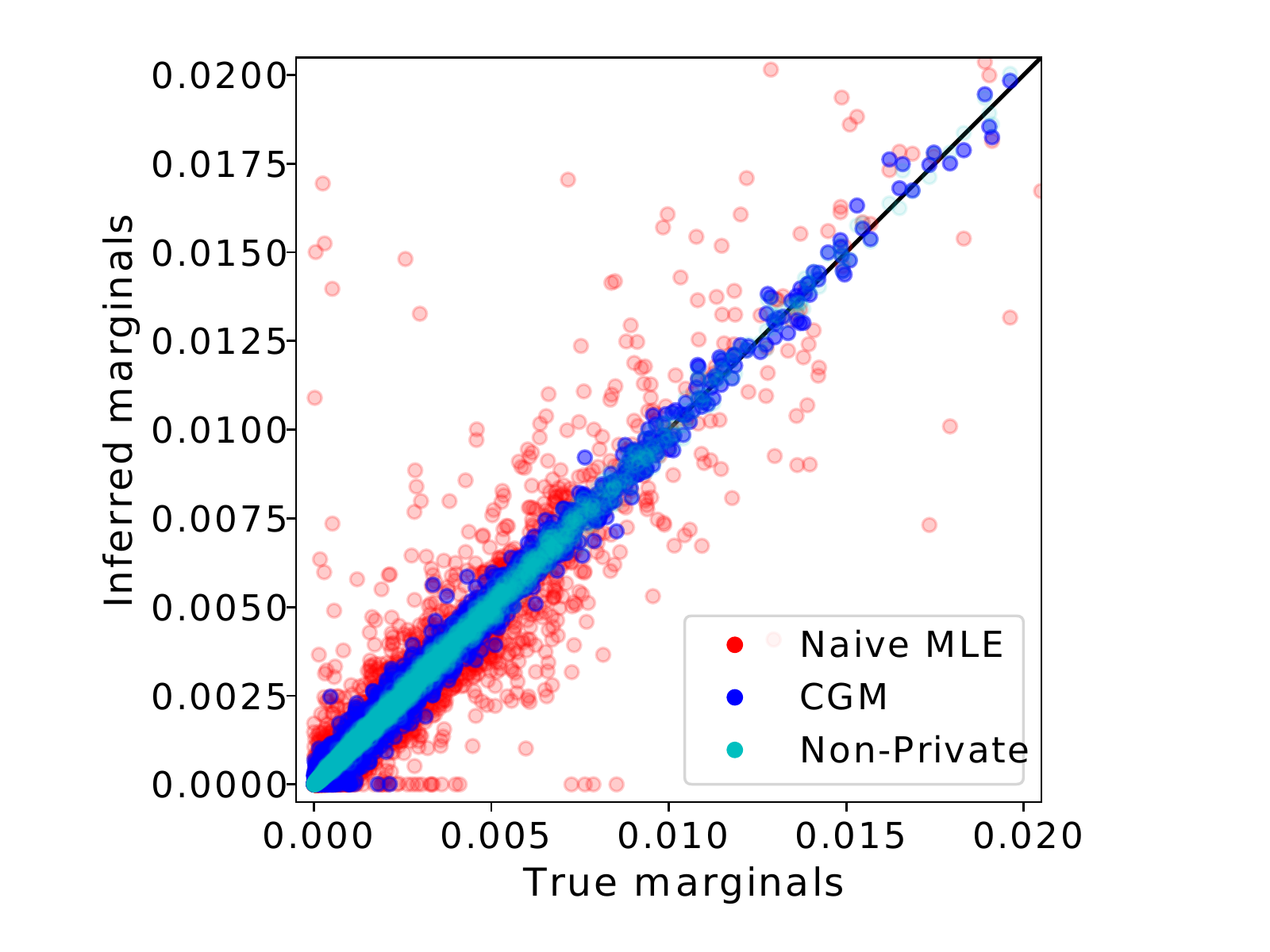}
            \label{fig:scatter}
        }
        \subfigure[]{
            \includegraphics[width=.21\textwidth]{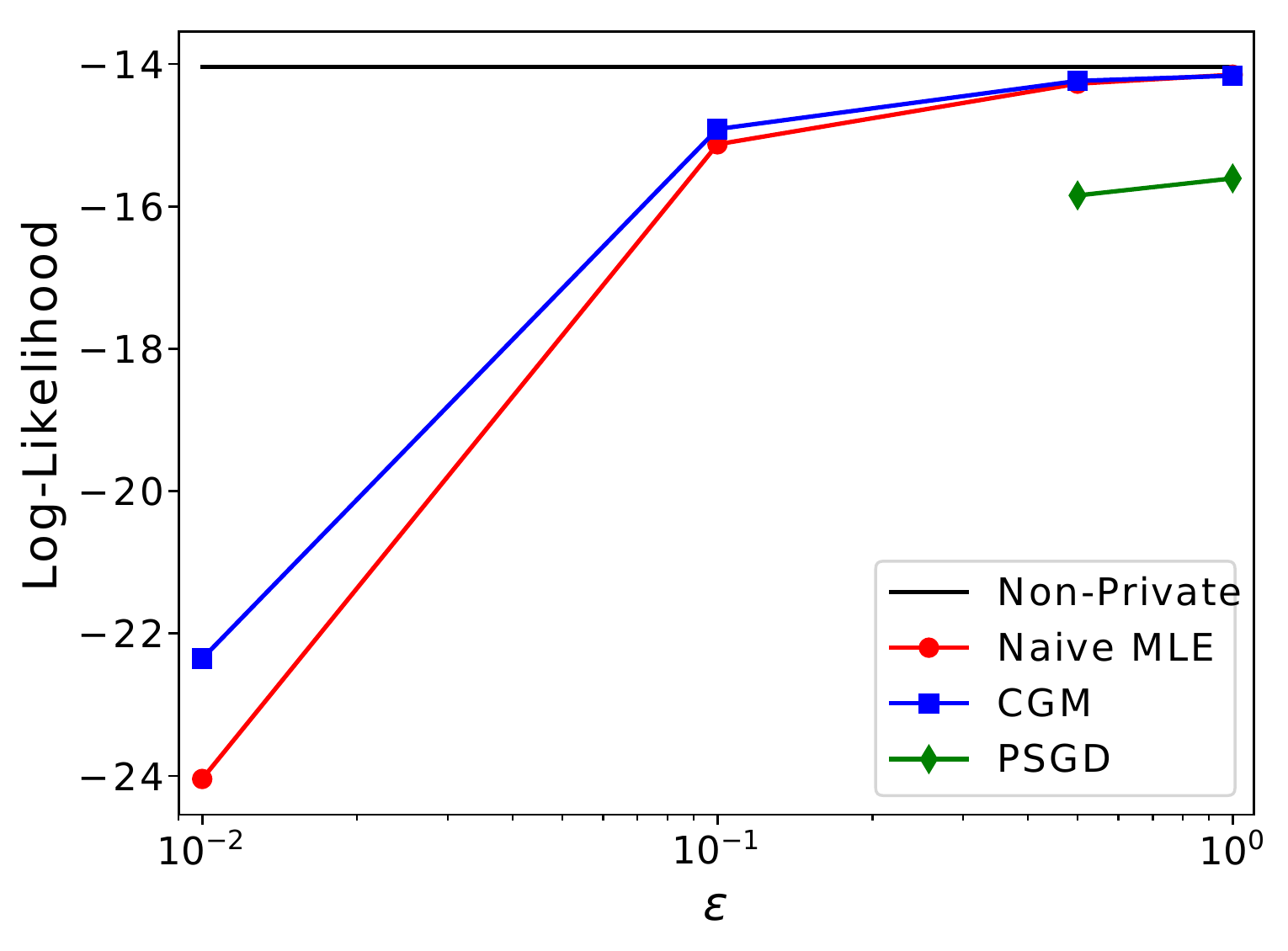}
            \label{fig:wifi}
        }
        
        \caption{\subref{fig:scatter} Scatter plots for true vs. inferred values of all edge marginals in an ER graph of 10 nodes with 20 states each. \subref{fig:wifi} Results for fitting a first-order chain on wifi data. The x-axis is privacy level; lower $\epsilon$ signifies stronger privacy guarantees. The y-axis is holdout log-likelihood.}
    \end{figure}

Figure~\ref{fig:scatter} shows a qualitative comparison of edge marginals of a single graph learned by the different methods, compared with the true model marginals; it is evident that CGM learns marginals that are much closer to both the true marginals and those learned by the non-private estimator than Naive MLE is able to learn. Naive MLE is the fastest method; CGM is approximately 4x/8x slower on third-order chains and ER graphs, respectively, and PSGD is approximately 27x/40x slower.




  \subsection{Wifi data} 
  \label{sub:wifi_data}

We study human mobility data in the form of connections to wifi access points throughout a highly-trafficked academic building over a twenty-one day period. We treat each (user ID, day) combination as an ``individual'', leading to 124,399 unique individuals; with this data preparation scheme, the unit of protection is one day's worth of a user's data. We discretize time by recording the location every 10 minutes, and assign null if the user is not connected \dan{to the network}. Our probability model $p(\x)$ is a pairwise graphical model over hour-long segments. Therefore, we break each individual's data into 24 one-hour long segments.

An individual now contributes 24 records to each contingency table for the model $p(\x)$. Therefore, the sensitivity is now $24$ times the number of edges (cliques). However, real data is typically sparse---i.e., an individual is typically observed only a small number of times over the observation period.  
Therefore, to reduce the sensitivity, the data is \emph{normalized} prior to calculating sufficient statistics, in a fashion similar to \cite{He:2015aa}. Each user contributes a value of $1/K$ to each contingency table, where $K$ is the number of edges $(x_s, x_t)$ for which the user's values are not both null. With this pre-processing in place, the sensitivity equals the number of edges in the model. A trade-off of this technique is that we bias the model towards individuals with fewer transitions, but we reduce the \dan{amount of noise} by limiting sensitivity caused by null--null transitions.

We reserve data from 25\% of the individuals for testing.
To compare different approaches, we apply Naive MLE, CGM, and PSGD to privately learn
parameters of a graphical model
from the training set (75\% of the data), with varying privacy levels. We then calculate holdout log-likelihood of the learned parameters on the test set. We again include a non-private method for reference, but in this case, all methods perform better than the random estimator, so we do not show it.

            

Figure \ref{fig:wifi} shows the results for fitting a time-homogeneous chain model (edges between adjacent time steps, every  potential $\psi(x_t, x_{t+1})$ is the same, and the model includes a node potential $\phi(x_1)$ so it can learn a time-stationary model). As in the synthetic data experiments, CGM improves upon naive MLE across all parameter regimes, and performance improves with population size $N$ and with weakening of privacy (larger $\epsilon$). Both methods outperform PSGD. Naive MLE is the fastest method; CGM is approximately 15x slower, and PSGD is approximately 46x slower.

                


\textbf{Acknowledgments}

This material is based upon work supported by the National Science Foundation under Grant Nos. 1409125, 1409143, 1421325, and 1617533.

\bibliography{cgm-privacy}
\bibliographystyle{icml2017}

\newpage 
\appendix

\onecolumn
\section{Extra Proofs}

\begin{proof}[Proof of Proposition~\ref{prop:sensitivity}]
It is well known that the local sensitivity of any contingency table with respect to our definition of $\nbrs(\mathbf{X})$ is one. This is easy to see from
the definition of $\n_C$ following Eq.~\eqref{eq:log-likelihood}: each
individual contributes a count of exactly one to each clique contingency table.
Since there are $|\C|$ tables, the local sensitivity is exactly $|\C|$ for all data sets, and, therefore, the sensitivity is the same.
\end{proof}

\begin{proof}[Proof of Proposition~\ref{prop:mse}]
Note that $n_C(i_C)$ is a sum of $N$ iid indicator variables, so $n_C(i_C) \sim
\text{Binomial}\big(N, \mu_C(i_C)\big)$, and $\Var\big(n_c(i_C)\big) = N \mu_C(i_C)\big(1-\mu_C(i_C)\big)$.
Now let $z \sim
\text{Laplace}(|\C|/\epsilon)$ and write:
\[
\bar{\mu}_C(i_C) = \frac{1}{N}\big(n_C(i_C) + z \big)
\]
Recall that $\E[z] = 0$ and $\Var(z) = 2|\C|^2/\epsilon^2$. We see immediately
that $\E[\bar{\mu}_C(i_C)] = \E\big[n_C(i_C)/N\big] = \mu_C(i_C)$.
Therefore, the estimator is unbiased and its mean-squared error is equal to its
variance. Since $n_C(i_C)$ and $z$ are independent, we have:
\begin{align*}
\Var\big(\bar{\mu}_C(i_C)\big) &= \frac{\Var\big(n_C(i_C)\big)}{N^2} + \frac{\Var(z)}{N^2} \\
&= \frac{\mu_C(i_C)\big(1-\mu_C(i_C)\big)}{N} + \frac{2|\C|^2}{N^2 \epsilon^2}
\end{align*}
The fact that $p(\x; \hat{\thetab})$ converges to $p(\x; \thetab)$ follows from
Proposition~\ref{prop:marginals} and the consistency of the marginals, as long
as the true marginals $\mub$ lie in the interior of the marginal polytope $\M$.
However, this is guaranteed because the true distribution $p(\x; \thetab)$ is
strictly positive.
\end{proof}

\begin{proof}[Proof of Proposition~\ref{prop:cgm-approx}]
After applying Stirling's approximation to $\log p(\n; \thetab)$ we obtain
\cite{nguyen2016approximate}:
\begin{equation}
\label{eq:H}
\log h(\n) \approx H(\n) = N\log N + \sum_{C \in \C} \hat{H}_C - \sum_{S \in \S}\nu(S) \hat{H}_S
\end{equation}
where we define $\hat{H}_A = -\sum_{i_A \in \X^{|A|}} n_A(i_A) \log n_A(i_A)$
for any $A \in \C \cup \S$. The term $\hat{H}_A$ is a scaled entropy. We can
rewrite it as:
\begin{align*}
\hat{H}_A
&= - N \sum_{i_A} \frac{n_A(i_A)}{N} \log \Big( \frac{n_A(i_A)}{N} \cdot N\Big) \\
&= - N \sum_{i_A} \hat{\mu}_A(i_A) \log \hat{\mu}_A(i_A) - N \sum_{i_A} \hat{\mu}_A(i_A) \log N \\
&= N H_A  -  N \log N
\end{align*}
where $H_A$ is now the entropy of the empirical marginal distribution
$\hat{\mub}_A = \n_A/N$. Since the total multiplicity of the separators is one
less than the number of cliques, when we substitute back into Eq.~\eqref{eq:H},
all of the $N \log N$ terms cancel, and we are left only with
\[
H(\n) = N \cdot \Big(\sum_{C \in \C(\T)} H_A - \sum_{S \in \S(\T)}\nu(S) H_A \Big)
\]
But, from standard arguments about the decomposition of entropy on junction
trees, the term in parentheses is exactly the entropy of distribution $q$
defined as:
\[
q(\x) =
\frac{\displaystyle \prod_{C \in \C} \prod_{i_C \in \X^{|C|}} \hat{\mu}_C(\x_C)}
{\displaystyle \prod_{S \in \S} \prod_{i_S \in \X^{|S|}} \hat{\mu}_S(\x_S)^{\nu(S)}},
\]
which factors according to $\C$ and can be written as $p(\x; \thetab)$ for
parameters $\thetab$ derived from the marginal probabilities. Although the
mapping from parameters to distributions is many-to-one, for any maginals
$\hat{\mub}$, there is a unique distribution $p(\x; \thetab)$ in the model family
that has marginals $\hat{\mub}$~\cite{wainwright2008graphical}, so this uniquely
defines $q(\x)$ as stated in the Proposition.
	
\end{proof}

\end{document}